\newtheorem{theorem}{Theorem}[section]
\newtheorem{lemma}[theorem]{Lemma}
\newcommand{\PD}{\; .}
\newcommand{\CM}{\; ,}
\newcommand\EV[2]{\underset{#1}{\mathbb{E}}\left[#2\right]}
\newcommand{\LIK}{{\mathcal L}}
\newcommand{\HAF}{\frac{1}{2}}
\newcommand\KLD[2]{{{\rm D_{KL}}}[{#1}\;||\;{#2}]}
\newcommand\JSD[2]{{{\rm D_{JS}}}[{#1}\;||\;{#2}]}
\newcommand\ENT[1]{{\rm H}\left[#1\right]}
\newcommand\INFO[1]{{\rm I}\left[#1\right]}
\newcommand\INFOJS[1]{{\rm I_{\rm JS}}\left[#1\right]}
\newcommand\INFOJSZ{{\rm I_{\rm JS}}}
\newcommand\HIDE[1]{{}}
\title{Connecting Jensen–Shannon and Kullback-Leibler Divergences: A New Bound for Representation Learning}
\author{Reuben Dorent}
\author{%
  Reuben Dorent   \\
  Inria  \\
  \texttt{reuben.dorent@inria.fr}
  \And
  Polina Golland \\
  MIT \\
  \texttt{polina@csail.mit.edu} 
  \And
  William Wells III \\
  Harvard, MIT \\
  \texttt{sw@bwh.harvard.edu} 
}
\begin{document}

\maketitle


\begin{abstract}
Mutual Information (MI) is a fundamental measure of statistical dependence widely used in representation learning. While direct optimization of MI via its definition as a Kullback-Leibler divergence (KLD) is often
intractable, many recent methods have instead maximized alternative dependence measures, most notably, the Jensen-Shannon divergence (JSD) between joint and product of marginal distributions via discriminative losses. However, the connection between these surrogate objectives and MI remains poorly understood. In this work, we bridge this gap by deriving a new, tight, and tractable lower bound on KLD as a function of JSD in the general case. By specializing this bound to joint and marginal distributions, we demonstrate
that maximizing the JSD-based information increases a guaranteed lower bound on mutual information. Furthermore, we revisit the practical implementation of JSD-based objectives and observe that minimizing the cross-entropy loss of a binary classifier trained to distinguish joint from marginal pairs recovers a known variational lower bound on the JSD.
Extensive experiments demonstrate that our lower bound is tight when applied to MI estimation. We compared our lower bound to state-of-the-art neural estimators of variational lower bound across a range of established reference scenarios. Our lower bound estimator consistently provides a stable, low-variance estimate of a tight lower bound on MI. We also demonstrate its practical usefulness  in the context of the Information
Bottleneck framework.
Taken together, our results provide new theoretical justifications and strong empirical evidence for using discriminative learning in MI-based representation learning.

\end{abstract}

\section{Introduction}
Estimating and optimizing Mutual Information (MI) is central to many representation learning frameworks \cite{chen2016infogan,oord_representation_2019,hjelm_learning_2019}. The MI between two random variables $U$ and $V$ is defined as the Kullback-Leibler Divergence (KLD) between their joint distribution  and the product of their marginals, 
\begin{equation}
    \INFO{U;V} \doteq \KLD{p_{UV}}{p_{U} \otimes p_{V}} \PD
    \label{KLI}
\end{equation}
Mutual information serves as a natural measure of dependence between $U$ and $V$. In representation learning, a common goal is to learn compact yet informative representations by maximizing the MI between encoded variables.

A popular class of methods for estimating MI involves learning a discriminator that distinguishes between samples from the joint distribution and those from the product of marginals. However, because the MI objective depends on a discriminator trained through a separate optimization problem, maximizing MI leads to a bilevel optimization problem, which is intractable in practice. 
To bypass this challenge, tractable Variational Lower Bounds (VLBs)~\cite{belghazi2018mine,nguyen2010estimating,poole_variational_nodate,song_understanding_2020} could be used, such as the Donsker–Varadhan bound~\cite{belghazi2018mine}, to optimize a lower bound of the MI using data-estimated expectations. Indeed, KLD belongs to the family of $f$-divergences, which have standard VLBs. However, VLBs are often unstable and underperforming in the context of representation learning~\cite{hjelm_learning_2019,oord_representation_2019,liao_demi_2020}.

In practice, many successful methods optimize alternative
$f$-divergences, such as the Jensen-Shannon divergence (JSD)~\cite{hjelm_learning_2019} between the joint $p_{UV}$ and the product of marginals $p_U\otimes p_V$, without knowing about the relationship between KLD and JSD. We refer to this measure as  ($\INFOJSZ$),
\begin{equation}
    \INFOJS{U;V} \doteq \JSD{p_{UV}}{p_{U}\otimes p_{V}} \PD
    \label{eq-jsinfo}
\end{equation}
While optimizing JSD has been shown to reach competitive results, the reason why optimizing a JSD-based dependence measure would increase standard MI, defined via KLD, remains unclear.  

In this work, we bridge this gap by showing that maximizing JSD is equivalent to maximizing a lower bound on MI.  Our main contribution is a new and optimal lower bound on KLD as a function of JSD,
\begin{equation}
    \Xi\left(\JSD{p}{q}\right) \leq \KLD{p}{q}
\end{equation}
where $\Xi(\cdot)$ is a strictly increasing function, analytically described below, that resembles a Logit function. In particular, this result shows that maximizing 
$\INFOJSZ$ also increases a guaranteed lower bound on mutual information.

Furthermore, we revisit the practical optimization of $\INFOJSZ$. We show that the $f$-divergence VLB for JSD is equivalent to optimizing the cross-entropy (CE) loss of a specific discriminator. This discriminator is trained to distinguish between true pairs of data and false pairs generated by scrambling the true pairs, a connection originally noted in the GAN paper \cite{goodfellow2014generative}:
\begin{equation}
    \log2 - \LIK_{\rm CE} \leq \JSD{p}{q} \PD
\end{equation}
While this bound is equivalent to the standard $f$-divergence VLB for JSD, we provide a new information-theoretic derivation that highlights the discriminator perspective. Beyond offering a more intuitive interpretation, this formulation also makes explicit the gap due to suboptimal discriminators.


Together, these two results imply that minimizing the CE loss of a discriminator trained to separate joint from marginal samples leads to an increase in a lower bound on the mutual information:
\begin{equation}
    \Xi\left(\log2 - \LIK_{CE}\right) \:\leq \: \Xi\left(\INFOJS{U;V} \right)\:\leq \: \INFO{U;V} \PD
\end{equation}

This work therefore provides a principled theoretical justification for existing representation learning approaches that maximize discriminative JSD-based losses to increase mutual information, and it introduces a theoretically grounded alternative to commonly used variational lower bounds.

The rest of the paper is organized as follows. Section~\ref{SecRelated} reviews related work. Section~\ref{SecJointrange} introduces the joint range of $f$-divergences. Section~\ref{Sec:Methods} presents our new lower bound relating JSD and KLD, along with a bound connecting cross-entropy loss and JSD. Section~\ref{SecExperiments} provides empirical evidence of the bound's tightness and its utility for representation learning tasks.  Section~\ref{SecConclusion} concludes this work.

\section{Related work}
\label{SecRelated}

\textbf{Estimating Mutual Information:}  MI  is a core concept in information theory, but its estimation remains notoriously difficult, particularly in high-dimensional or continuous settings with limited data. Traditional estimators can be broadly categorized as parametric or non-parametric. Parametric methods rely on strong distributional assumptions (e.g., Gaussianity), which often do not hold in practice. Non-parametric approaches, such as histogram binning, $k$-nearest neighbors \citep{kraskov2004estimating}, and kernel density estimation \citep{moon1995estimation,kwak2002input}, are assumption-free but suffer from poor scalability with respect to data dimensionality and sample size, limiting their applicability in modern machine learning contexts.

To overcome these limitations, recent work has introduced variational lower bounds (VLBs) on MI that leverage the expressiveness of deep neural networks. A key example is MINE \cite{belghazi2018mine}, which leverages the Donsker-Varadhan (DV) representation of the KLD to train a neural network to maximize a lower bound on MI. A related estimator, NWJ \citep{nguyen2010estimating}, arises from the dual representation of $f$-divergences and shares similar objectives.   However, theoretical work by \citet{mcallester2020formal} shows that such estimators are fundamentally upper-bounded by $\log(n)$, where $n$ is the number of data samples. Moreover, both MINE and NWJ exhibit high variance, in part due to unstable estimation of the partition function. To tackle this, \citet{poole_variational_nodate}  introduce hybrid VLB methods that generalize existing methods to trade off bias and variance. SMILE \citep{song_understanding_2020} further improves upon MINE by clipping the density ratio,  which helps reducing the estimator variance. It also introduces a consistency check that exposes weaknesses in existing VLB estimators, especially in high MI regimes, and highlights that optimizing VLBs does not necessarily imply an accurate estimation of a lower bound of MI.

Another lower bound approach is InfoNCE, introduced via contrastive predictive coding (CPC)~\citep{oord_representation_2019}. InfoNCE considers MI estimation as a multi-way classification problem, producing stable and low-variance estimates. However, this estimator is biased and upper-bounded by $\log(b)$, where $b$ is the batch size, which limits its ability to capture high MI values.

Beyond VLB methods, another class of estimators decouples the estimation and optimization stages. NJEE \citep{shalev2022neural} extends theoretical work by \citet{mcallester2020formal} and estimates MI through entropy subtraction using neural density models. In contrast, DEMI \citep{liao_demi_2020} and PCM \citep{tsai2020neural} reformulate MI estimation as a binary classification task: a model is trained to distinguish paired from unpaired samples, and discriminator outputs are then mapped to MI estimates. Interestingly, SMILE~\citep{song_understanding_2020} can also be interpreted within this discriminator-based framework. As pointed out in \cite{letiziamutual}, its implementation\footnote{https://github.com/ermongroup/smile-mi-estimator} relies on maximizing an estimator of the JSD~\citep{nowozin_f-gan_nodate}, with MI subsequently estimated from the outputs of a clipped discriminator. Overall, by disentangling optimization and estimation using a two-step procedure, these methods alleviate some instability issues of VLBs. More recently, \citet{letiziamutual} extended this setup by incorporating general $f$-divergences and empirically observed that the JSD (referred as GAN-DIME) provides the best tradeoff between bias and variance, outperforming alternatives based on KLD or Hellinger distance with a relatively low-variance MI estimation. While these two-step estimators are currently among the most accurate MI estimates in practice, they require retraining whenever the underlying joint distribution changes. This limitation is particularly restrictive in representation learning, where the latent variables evolve during optimization. As a result, these methods cannot be reused as plug-in MI estimators during training. In contrast, our bound can be directly optimized via JSD, using a single discriminator, enabling stable, end-to-end training. 

\textbf{Maximizing Mutual Information in representation learning: } Several recent methods use MI maximization as a principle for representation learning via Information Bottleneck \citep{tishby2000information}. VLB-based methods such as MINE and NWJ have also been applied to representation learning, although their practical limitations, such as high variance, make them less attractive compared to contrastive approaches \cite{hjelm_learning_2019,oord_representation_2019}. Alternatively, contrastive predictive coding (CPC) \citep{oord_representation_2019} and its InfoNCE objective have shown strong empirical results across domains such as audio, vision, and NLP. However, since the estimation is upper-bounded by $\log(b)$, where $b$ is the batch size, it requires a large batch size, which may be impractical computationally.

Another popular approach is Deep InfoMax from  \citet{hjelm_learning_2019}, which shifts the focus from strictly maximizing MI to maximizing other measures of statistical dependence, particularly the JSD-based objective $\INFOJSZ$. Unlike KLD-based bounds used in MINE or NWJ, the JSD is symmetric and bounded, often resulting in more stable optimization, and, unlike CPC and MINE, does not require a large batch size. Deep InfoMax leverages a JSD-based MI estimator introduced by \citet{nowozin_f-gan_nodate}. To support the use of JSD-based optimization objective, \citet{hjelm_learning_2019} empirically show that for randomly sampled discrete sparse distributions, $\INFOJSZ$ correlates well with true MI, suggesting it can act as a meaningful proxy. In this work, we go further and provide a theoretical justification: we show that maximizing $\INFOJSZ$ provably increases a lower bound on the true mutual information.

\textbf{Jensen-Shannon divergence in GANs: } The link between JSD and discriminator training has its roots in Generative Adversarial Networks (GANs), where \citet{goodfellow2014generative} showed that, under an optimal discriminator, the GAN objective minimizes an affine transformation of the JSD between the real and generated data distributions.  This was later generalized in the $f$-GAN framework \citep{nowozin_f-gan_nodate}, where GAN training corresponds to a variational lower bound on a general $f$-divergence, estimated via a learned discriminator. \citet{chen2016infogan} proposed InfoGAN, which incorporates MI maximization within the GAN framework using a variational approximation originally introduced by \citet{barber2004algorithm}. Rather than estimating the MI between joint and marginal distributions directly, InfoGAN targets the MI between a subset of latent variables and the generator's output, approximated using an auxiliary network and a variational lower bound from \citet{barber2004algorithm}. While InfoGAN brings information-theoretic principles into adversarial training, it does so in a generative setting with a specific structural assumption on the latent space.

\section{Joint range of $f$-divergences}\label{SecJointrange}
To establish a novel lower bound on mutual information, we build on the concept of the \textit{joint range} of  $f$-divergences, which we summarize here. While this concept has been covered in classical information theory~\cite{yury_polyanskiy_book, harremoes2011pairs}, to our knowledge, 
it has not been utilized in the context of representation learning or mutual information estimation.

Let $p$ and $q$ be two probability distributions defined on the measurable space $(\mathcal{X},\mathcal{F})$. For any convex function $f : (0, \infty) \to \mathbb{R} $ such that $f(1)=0$, the $f$-divergence between $p$ and $q$ is defined as:
\begin{equation}
    D_f(p \| q) \doteq \EV{q}{f\left(\frac{dp}{dq}\right)}.
\end{equation}
Common examples include the Kullback-Leibler divergence, total variation distance, and Jensen-Shannon divergence, each corresponding to a different choice of $f$.

Given two such convex functions $f$ and $g$, the joint range of the pair $(D_f, D_g)$ is the set of all achievable values:
\begin{equation}
    \mathcal{R}_{f,g} = \left\{ \left( D_f(p \| q), D_g(p \| q) \right) : p, q \in \mathcal{P} \right\},
\end{equation}
where $\mathcal{P}$ denotes the set of all probability distributions on some measurable space.

In addition, the joint range over all $k$-dimensional categorical distributions $[k]$ is defined as:
\begin{equation}
    \mathcal{R}_{k; f,g} = \left\{ \left( D_f(p \| q), D_g(p \| q) \right) : p, q \text{ are probability measures over } [k] \right\}.
\end{equation}

The joint range $\mathcal{R}_{f,g}$ has several useful properties; it is convex, and the lower envelope of the joint range is the
optimal lower bound relating the two divergences.

Remarkably, to characterize the entire joint range $\mathcal{R}_{f,g}$, it is sufficient to restrict to binary distributions, i.e. Bernoulli distributions \cite{harremoes2011pairs, yury_polyanskiy_book}.  

\begin{theorem}[Harremoës-Vajda, 2011 \cite{harremoes2011pairs}]
\label{theorem:jointrange}
\begin{equation}
    \mathcal{R}_{f,g} = \text{co}\left( \mathcal{R}_{2;f,g}\right)
\end{equation}
where $\text{co}$ denotes the convex hull.
\end{theorem}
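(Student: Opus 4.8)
The plan is to prove the two inclusions $\mathrm{co}(\mathcal{R}_{2;f,g}) \subseteq \mathcal{R}_{f,g}$ and $\mathcal{R}_{f,g} \subseteq \mathrm{co}(\mathcal{R}_{2;f,g})$ separately, with essentially all the work in the second. The first is immediate: a Bernoulli pair lives on the two-point measurable space, so $\mathcal{R}_{2;f,g} \subseteq \mathcal{R}_{f,g}$, and since $\mathcal{R}_{f,g}$ is convex (one of the properties recalled above), it contains the convex hull of any of its subsets.

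For the nontrivial inclusion, the starting point is that $\big(D_f(p\|q), D_g(p\|q)\big)$ depends on $(p,q)$ only through the law of the likelihood ratio. View $T = \mathrm{d}p/\mathrm{d}q$ as a random variable on $(\mathcal{X},q)$ — allowing the value $+\infty$ on the $q$-null, $p$-positive part with the convention $f(\infty) = \lim_{t\to\infty} f(t)/t$ — so that $D_f(p\|q) = \EV{q}{f(T)}$ and $D_g(p\|q) = \EV{q}{g(T)}$. If $\mu$ is the law of $T$ under $q$, it is a probability measure on $[0,\infty]$ with $\int t\,\mathrm{d}\mu(t) \le 1$ (equality when $p \ll q$), and the achieved point equals $\big(\int f\,\mathrm{d}\mu,\ \int g\,\mathrm{d}\mu\big)$, which is \emph{affine} in $\mu$. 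Conversely, every finitely supported such $\mu$ comes from a categorical pair: set $q_i = \mu(\{t_i\})$ and $p_i = t_i q_i$, which sum to one by the first-moment constraint, and a two-atom $\mu = (1-\beta)\delta_s + \beta\delta_t$ is exactly a pair of Bernoulli distributions.

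I would then invoke the geometry of the set $\mathcal{M}$ of probability measures on $[0,\infty]$ with barycenter at most $1$: it is convex, and being cut out of the simplex of probability measures by a single linear moment constraint, its extreme points are supported on at most two atoms, hence map into $\mathcal{R}_{2;f,g}$. For finitely supported $\mu$ this is a finite-dimensional polytope argument — measures supported on a fixed finite set with barycenter $\le 1$ form a polytope whose vertices have at most two nonzero coordinates, so $\mu$ is a convex combination of $\le 2$-atom measures, and applying the affine map exhibits $\big(D_f(p\|q), D_g(p\|q)\big)$ as a convex combination of points of $\mathcal{R}_{2;f,g}$. A general $\mu$ is then handled by a Choquet/Krein–Milman representation over the extreme points of $\mathcal{M}$, or by approximation with categorical truncations.

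The main obstacle is non-compactness: $\mu \mapsto \int f\,\mathrm{d}\mu$ and $\mu \mapsto \int g\,\mathrm{d}\mu$ may be $+\infty$ and are only lower semicontinuous, so passing from ``extreme points are binary'' to ``every point of $\mathcal{R}_{f,g}$ is a \emph{finite} convex combination of binary ones'' — with the convex hull, not merely its closure, as the statement demands — requires care. One clean route is to work with finitely supported $\mu$ throughout and observe that excess mass can always be relocated to the atom at $t=1$ (which contributes $(0,0) \in \mathcal{R}_{2;f,g}$, realised by $p=q$) without leaving $\mathrm{co}(\mathcal{R}_{2;f,g})$; the other is to pass to the compactification $[0,\infty]$ and use semicontinuity of $D_f$. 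Disposing of the singular part of $p$ relative to $q$ and the limiting behaviour of $f$ at $0$ and $\infty$ is a routine but necessary bookkeeping step.
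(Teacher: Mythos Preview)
The paper does not prove Theorem~\ref{theorem:jointrange}; it is quoted as a classical result of Harremo\"es and Vajda (2011) and simply used as a black box in Section~\ref{SecJointrange} and in the proof of Theorem~\ref{theorem:ours}. So there is no in-paper argument to compare your proposal against.

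On its own merits, your outline is essentially the standard proof. The reduction to the law of the likelihood ratio, the identification of two-atom measures with Bernoulli pairs, and the extreme-point argument on the moment-constrained simplex are exactly the right ingredients. Two remarks on the places you flagged as delicate. First, the mismatch between ``first moment $\le 1$'' and ``$\sum p_i = 1$'' is real: taking the law of $T=\mathrm{d}p/\mathrm{d}q$ under $q$ loses the $p$-singular part entirely, so you do need either an atom at $+\infty$ (weighted by $p_\perp(\mathcal{X})$ and contributing $f'(\infty)$) or the more symmetric device of pushing forward a dominating measure $\lambda$ through $(\mathrm{d}p/\mathrm{d}\lambda,\mathrm{d}q/\mathrm{d}\lambda)$. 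Second, the passage from a general $\mu$ to a finite convex combination of binary points can be done without Choquet or any limiting argument: since you only need to match the four integrals $\int 1\,\mathrm{d}\mu$, $\int t\,\mathrm{d}\mu$, $\int f\,\mathrm{d}\mu$, $\int g\,\mathrm{d}\mu$, a Richter--Rogosinski / Tchakaloff moment-matching lemma produces a measure with at most four atoms achieving the same values, after which your finite-polytope vertex argument applies directly and lands you in the genuine convex hull rather than its closure. That closes both gaps you identified.
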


\section{Maximizing discriminator cross-entropy increases a lower bound on Mutual Information}\label{Sec:Methods}
In this section, we use $f$-divergence properties, including the previously introduced joint range, to
demonstrate that training a discriminator to differentiate samples from the joint and product of the marginal distributions leads to maximizing a lower bound of the Mutual Information.

\subsection{General lower bound on Kullback–Leibler
divergence by Jensen–Shannon divergence}

We introduce here a new optimal lower bound on the KLD in terms of the JSD, obtained by analyzing the joint range of the two divergences.

The  KLD ($\rm D_{KL}$) and JSD ($\rm D_{JS}$) between two probability distributions $p$ and $q$ are defined as
\begin{align}
        \KLD{p}{q} & \doteq 
        \EV{p}{\log \frac{p}{q}}, \quad 
    \JSD{p}{q} \doteq \HAF \KLD{p}{m} + \HAF \KLD{q}{m} \ ,
\end{align}
where $m \doteq \frac{1}{2} ( p + q)$  is a mixture distribution of $p$ and $q$.

To derive the tightest lower bound of KLD in terms of JSD, we analyze the joint range of the two divergences, i.e., the set of all possible pairs $(x,y)\in [0,\log2)\times\mathbb{R}^+$ such that:
\begin{equation}
    x = \JSD{p}{q} \quad \mathrm{and} \quad y = \KLD{p}{q}
\end{equation}
for \textit{all} pairs of distributions $p,q$ that are compatible with the divergences. 

Following Theorem~\ref{theorem:jointrange} by \citet{harremoes2011pairs}, this joint range can be fully characterized by restricting $p$, $q$ to be pairs of Bernoulli distributions compatible with JSD and KLD, 
\begin{equation}
    x = \JSD{B(\mu)}{B(\nu)} \quad \mathrm{and} \quad y = \KLD{B(\mu)}{B(\nu)} \ , 
\end{equation}
for $(\mu,\nu) \in [0,1] \times (0,1) \cup \{(0,0), (1,1)\}$, where $B(\theta)$ is a Bernoulli distribution with parameter~$\theta$.

We are therefore interested in the image of the mapping $\phi$ defined as:
\begin{equation}
\phi: (\mu, \nu) \mapsto  \left(\JSD{B(\mu)}{B(\nu)}, \KLD{B(\mu)}{B(\nu)} \right)
\end{equation}
from the unit square $[0,1] \times (0,1) \cup \{(0,0), (1,1)\}$ into $\mathbb{R}^2$. 

Exploiting the symmetry of the divergences:
\begin{align}\KLD{B(\mu)}{B(\nu)} &= \KLD{B(1 - \mu)}{B(1-\nu)}, \\\JSD{B(\mu)}{B(\nu)} &= \JSD{B(1 - \mu)}{B(1-\nu)}, 
\end{align}
we restrict our analysis to the image $\phi(\Omega)$ of the lower triangle ~\sloppy$\Omega=\{(\mu,\nu) \in [0,1]\times(0,1] \mid \nu \leq \mu\}$.

\begin{figure}
    \centering
    \includegraphics[width=0.44\linewidth]{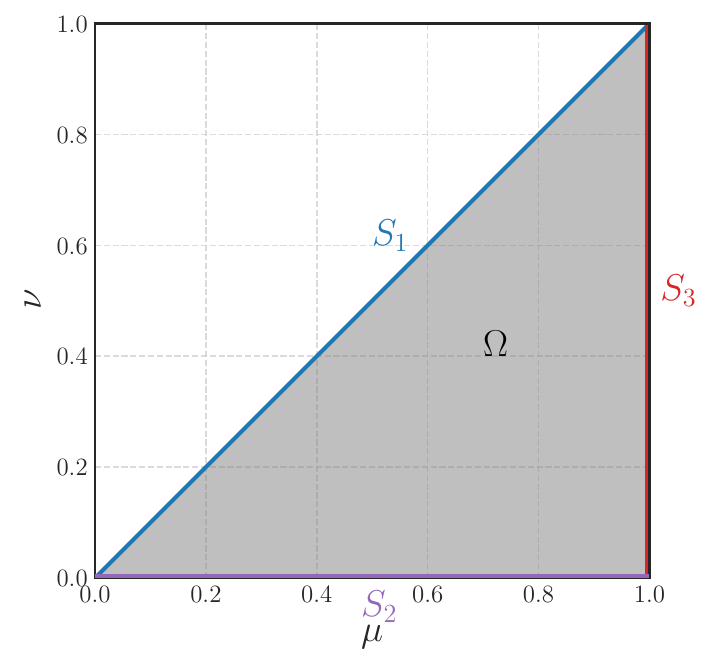}
    \includegraphics[width=0.55\linewidth]{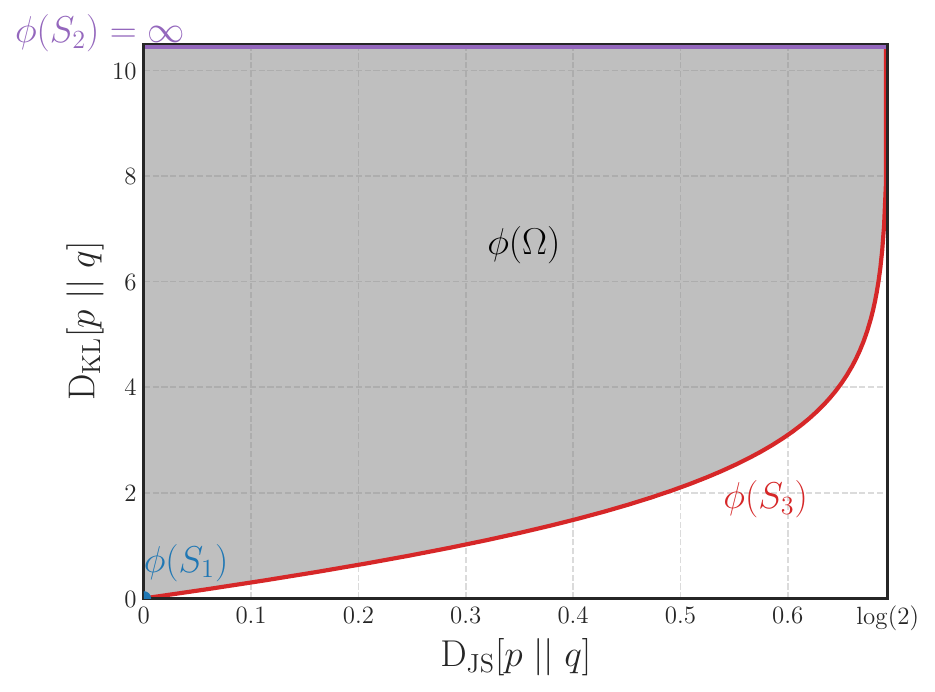}
    \caption{Left: Parameter space of two Bernoulli distributions. Right: Image (in gray) of the mapping $\phi$, showing the joint range of the Jensen–Shannon and the Kullback–Leibler divergences. The red
    curve is our new lower bound.}

    \label{figJR_good}
\end{figure}

Let us analyze the behavior of $\phi$ on the three edges of the triangle as shown in Fig.~\ref{figJR_good}.
\begin{enumerate}
\item \textbf{Diagonal} ($S_1: \mu = \nu$):
   $$
   \KLD{B(\mu)}{B(\mu)} = 0, \quad \JSD{B(\mu)}{B(\mu)} = 0.
   $$
   Thus, all points on the diagonal map to the origin: $\phi(\mu,\mu) = (0, 0)$.

\item  \textbf{Lower Edge} ($S_2: \nu \to 0$):
   \begin{equation*}
   \KLD{B(\mu)}{B(0)} \to +\infty \text{ for any } \mu > 0.   
   \end{equation*}
   The JSD remains finite, so points on this edge map to $(0,\log2)\times\{ \infty \}$.

 \item \textbf{Right edge} ($S_3: \mu = 1$):
   For this segment, we can obtain:
\begin{align}
         \KLD{B(1)}{B(\nu)} & = -\log \nu, \\
    \label{eq:JSD_BB} \JSD{B(1)}{B(\nu)} &= \log2 - \frac{1}{2}\left(\log(1+\nu) - \nu \log\left(\frac{\nu}{1+\nu}\right) \right) \ .
\end{align}
   As $\nu \to 0$, $\KLD{B(1)}{B(\nu)} \to +\infty$, while $\JSD{B(1)}{B(\nu)}$ increases to its maximum, i.e. $\log 2$.
\end{enumerate}

We now present the main result, which shows that maximizing the JSD increases a lower bound on the KLD and that this lower bound corresponds to the image of $S_3$.
\begin{theorem}
\label{theorem:ours}
    The optimal  lower bound (i.e., equality is achieved for a family of distributions) on Kullback-Leibler divergence established by the Jensen-Shannon divergence is:
\begin{equation}
\label{eq:theorem_ours}
    \Xi\left(\JSD{p}{q}\right) \leq \KLD{p}{q}
\end{equation}
where $\Xi:[0,\log 2)\mapsto\mathbb{R}^{+}$ is a strictly increasing function defined implicitly by its inverse,
\begin{equation}
    \Xi^{-1}:y\mapsto  \JSD{B(1)}{B\left(\exp\left(-y\right)\right)},
    \label{eq:definition_xi}
\end{equation}
where $B\left(\exp\left(-y\right)\right)$ is a Bernoulli distribution with parameter $\exp\left(-y\right)\in[0,1/2)$. $\Xi^{-1}$ has a closed-form expression related to Eq.~\ref{eq:JSD_BB} and defined in Appendix~\eqref{eq:def_Xi_appendix}. 
\end{theorem}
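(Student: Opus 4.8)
The plan is to identify the asserted bound with the \emph{lower boundary} of the joint range $\mathcal{R}:=\mathcal{R}_{\JSDZ,\KLDZ}$ and to show that this boundary is exactly the image of the right edge $S_3$ analyzed above. By Theorem~\ref{theorem:jointrange}, $\mathcal{R}=\text{co}(\mathcal{R}_{2;\JSDZ,\KLDZ})$, and $\mathcal{R}_{2;\JSDZ,\KLDZ}$ is the image of the parameter square under $\phi$; using the symmetry $\theta\leftrightarrow 1-\theta$ recorded above, the square may be replaced by the triangle $\Omega$, so $\mathcal{R}=\text{co}(\phi(\Omega))$. The optimal lower bound of $y=\KLD{p}{q}$ as a function of $x=\JSD{p}{q}$ is by definition the lower envelope $L(x):=\inf\{y:(x,y)\in\mathcal{R}\}$, and since $\mathcal{R}$ is convex, $L$ is convex. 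It therefore suffices to prove $L=\Xi$.

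First I would verify that $\Xi$ is well defined as in the statement. Writing $g(\nu):=\JSD{B(1)}{B(\nu)}$ for the closed form in Eq.~\eqref{eq:JSD_BB}, I would differentiate and check that $g'(\nu)<0$ on $(0,1)$, so that $g$ is a strictly decreasing bijection from $(0,1]$ onto $[0,\log2)$; composing with the decreasing bijection $y\mapsto\exp(-y)$ from $\mathbb{R}^{+}$ onto $(0,1]$ shows that $\Xi^{-1}(y)=g(\exp(-y))$ is a strictly increasing bijection $\mathbb{R}^{+}\to[0,\log2)$, hence $\Xi$ is a well-defined strictly increasing bijection $[0,\log2)\to\mathbb{R}^{+}$, and its closed form (Appendix~\eqref{eq:def_Xi_appendix}) follows by substituting $\nu=\exp(-y)$ into Eq.~\eqref{eq:JSD_BB}. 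I would also record the sign of $(\Xi^{-1})''$ --- equivalently, check that $y\mapsto g(\exp(-y))$ is concave --- since this yields convexity of $\Xi$, which is needed in the final step.

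The core of the argument is the pointwise Bernoulli inequality: for every $(\mu,\nu)\in\Omega$,
\[
\KLD{B(\mu)}{B(\nu)}\;\ge\;\Xi\!\left(\JSD{B(\mu)}{B(\nu)}\right),
\]
equivalently, applying the increasing map $\Xi^{-1}$, $\JSD{B(\mu)}{B(\nu)}\le\JSD{B(1)}{B(\exp(-\KLD{B(\mu)}{B(\nu)}))}$. I would prove this by fixing $x=\JSD{B(\mu)}{B(\nu)}$ and minimizing $\KLD{B(\mu)}{B(\nu)}$ over the level set $\{(\mu,\nu)\in\Omega:\JSD{B(\mu)}{B(\nu)}=x\}$: a Lagrange-multiplier analysis together with inspection of the three edges of $\Omega$ should show that the constrained minimum cannot occur at an interior stationary point, that edge $S_1$ contributes only the origin and $S_2$ contributes the value $+\infty$, so the minimizer lies on $S_3$ (where $\mu=1$), on which the value equals $\Xi(x)$ by the previous step. (An equivalent route is to fix $\KLDZ=c$ and show that the JSD is maximized at $\mu=1$; either way the work is to control $\KLDZ$ along JSD level curves.)

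Finally I would assemble the pieces. The Bernoulli inequality gives $\phi(\Omega)\subseteq\{(x,y):y\ge\Xi(x)\}$; since $\Xi$ is convex, the right-hand set is convex and hence contains $\text{co}(\phi(\Omega))=\mathcal{R}$, so $L\ge\Xi$. Conversely the curve $\{(\Xi^{-1}(y),y):y\ge0\}$ is the image of $S_3\subseteq\mathcal{R}$, giving $L\le\Xi$; therefore $L=\Xi$. Translating back from the joint range to arbitrary distributions via Theorem~\ref{theorem:jointrange} yields $\Xi(\JSD{p}{q})\le\KLD{p}{q}$ for all $p,q$, with equality attained on the one-parameter family $\{(B(1),B(\nu))\}_{\nu\in(0,1]}$, which establishes the bound together with its optimality. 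The main obstacle is the Bernoulli inequality: ruling out interior stationary points of $\KLDZ$ along JSD level curves and pinning the constrained minimizer to $\mu=1$ rather than elsewhere on $\partial\Omega$ is the genuinely delicate point, whereas the derivative computations behind the monotonicity of $g$ and the convexity of $\Xi$ are routine but must be carried out, since the convex-hull containment in the last step relies on $\Xi$ being convex.
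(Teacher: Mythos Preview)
Your outline follows the paper's architecture: reduce to Bernoulli pairs via Theorem~\ref{theorem:jointrange}, restrict to the triangle $\Omega$ by symmetry, and argue that the lower boundary of $\phi(\Omega)$ is the image of the right edge $S_3$. The difference lies only in how the interior of $\Omega$ is excluded. The paper computes the Jacobian $J$ of $\phi$ explicitly and, assuming $\det J\ne 0$ on $\Omega^{\mathrm{o}}$, invokes the open mapping theorem to conclude that $\phi$ sends interior points of $\Omega$ to interior points of the image, so $\partial\phi(\Omega)\subseteq\phi(\partial\Omega)$; since $S_1$ collapses to the origin and $S_2$ is sent to $y=+\infty$, only $S_3$ can supply the lower envelope.

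Your Lagrange-multiplier step is the dual of exactly this: an interior stationary point of $k$ on a level set $\{j=x\}$ is a point where $\nabla k$ and $\nabla j$ are parallel, i.e.\ where $\det J(\mu,\nu)=0$. So ``no interior stationary points'' is the same assertion as ``$\det J\ne 0$ on $\Omega^{\mathrm{o}}$.'' The paper does \emph{not} prove this analytically; it is stated as a conjecture (Conjecture~\ref{conjecture:appendix_2}) and supported only by a numerical plot of $-\det J$. Hence the ``genuinely delicate point'' you correctly flag is exactly the step the authors could not close, and your proposal carries the same gap unless you actually have an argument for the sign of $\det J$ (equivalently, for the absence of interior constrained minima).

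On one point you are more careful than the paper: since the joint range is the \emph{convex hull} of $\phi(\Omega)$, identifying the lower boundary of $\phi(\Omega)$ alone is not sufficient unless that boundary curve is itself convex. Your explicit verification that $\Xi$ is convex---so that the epigraph $\{y\ge\Xi(x)\}$ is a convex set containing $\phi(\Omega)$ and hence $\text{co}(\phi(\Omega))$---plugs a hole the paper's argument leaves implicit.
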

In particular, this result holds when $p$ and $q$ are respectively the joint and the product of the marginals, leading to a novel lower bound in information theory between $\INFOJSZ$ and MI.  

{\bf Approximation:} Although $\Xi$ does not have a closed-form expression, it can be approximated point-wise using numerical solvers (example in \ref{appendix:numerical_solver}). Alternatively, we introduce a smooth differentiable approximation using the Logit function, $\mathbb{L}(x) \doteq \log \frac{x}{1-x}$ (details in Appendix \ref{appendix:approximation}):
\begin{equation}
    \Xi(x) \approx 1.15 * \mathbb{L}\left(.5 \left(\frac{x}{\log 2} + 1.0\right)\right) \ .
\end{equation}

\subsection{$f$-divergence lower bound on JSD from discriminator cross-entropy}
Having established that the JS-based information ($\INFOJSZ$) provides a lower bound on the MI,
we now show that, in turn, the $\INFOJSZ$ can be lower bounded using the expected cross-entropy (CE) loss of a discriminator trained to distinguish between dependent and independent pairs. Therefore, optimizing a binary cross-entropy classifier effectively increases a lower bound on MI via the inequality chain.

We adopt a standard binary classification setup to estimate the JSD between the joint distribution $p_{UV}$ and the product of its marginals $p_U\otimes p_V$. Consider the mixture model:
\begin{equation}
\label{EqModel}
(U,V)\mid Z=1 \sim p_{UV}, \quad (U,V) \mid Z=0 \sim p_U\otimes p_V, \quad \text{with } Z \sim B(1/2).
\end{equation}
This defines a joint distribution over $(u,v,z)$, denoted $\tilde{p}(u,v,z)$, with marginal over $(u,v)$:
\begin{equation}
\label{Eq:23}
    \tilde{p}(u,v)  = m(u,v) \doteq  \tfrac{1}{2} p_{UV}(u,v) + \tfrac{1}{2} p_U(u)p_V(v).
\end{equation}

Since the JSD is a $f$-divergence, it admits a variational representation~\citep{nowozin_f-gan_nodate, yury_polyanskiy_book}, expressible as:
\begin{equation}
\label{eq:jsd_variational}
\INFOJS{U;V} = \JSD{p_{UV}}{p_{U}\otimes p_{V}} = \tfrac{1}{2} \max_t \left[ \EV{p_{UV}}{t} - \EV{p_{U} \otimes p_{V}}{-\log(2 - \exp(t))} \right]
\end{equation}
where $t: \mathcal{U} \times \mathcal{V} \to (-\infty,\log 2)$ is a variational function.

Now, define the following reparameterization,
\begin{equation}
t(u,v) \doteq \log(2 q_\theta(z = 1 \mid u, v)),
\label{eq:substitution}
\end{equation}
where $q_\theta(z=1\mid u, v)$ is meant to approximate $\tilde{p}(z = 1 \mid u, v)$, the optimal classifier. 

Substituting  Eq.~\eqref{eq:substitution} into Eq.~\eqref{eq:jsd_variational} (details in~\ref{appendix:proof_26}), we obtain the following lower bound:
\begin{equation}
\label{eq:I_JS_q}
        \INFOJS{U;V}  = \HAF \max_{\theta} \left[  \log 4 + \EV{p_{UV}}{\log q_{\theta}(z=1|u,v)} + \EV{p_{U} \otimes p_{V}}{\log(1 - q_{\theta}(z=1|u,v))} \right]
\end{equation}

Thus, we obtain
\begin{equation}
    \INFOJS{U;V}  \geq \log 2 - \min_{\theta} \mathcal{L}_{\mathrm{CE}}(\theta) \ ,
    \label{eq:ce_JSD}
\end{equation}
where $\mathcal{L}_{\mathrm{CE}}(\theta)$ is the expected binary cross-entropy loss between   $q_{\theta}$ and the true posterior. 

This bound is tight in the non-parametric limit (infinite data and model capacity). An equivalent result was shown in the context of GAN discriminators~\cite{goodfellow2014generative} and can also be derived from~\cite{shannon_properties_2020} via the reparameterization $d(u,v)\doteq\mathbb{L}\left(q_{\theta}(z{=}1\mid u,v)\right)$. In Appendix~\ref{appendix:alternative_proof}, we present an alternative proof that quantifies the gap between the JSD lower bound and the one obtained from cross-entropy.

Combining \eqref{eq:ce_JSD} and \eqref{eq:theorem_ours}, we obtain our new VLB $I_{CE}(\theta)$ on MI as a function of a discriminator CE:
\begin{equation}
    I_{CE}(\theta) \doteq \Xi\left(\log2 - \LIK_{CE}\left(\theta\right)\right) \:\leq \: \Xi\left(\INFOJS{U;V} \right)\:\leq \: \INFO{U;V} \PD
    \label{eq:ourboud}
\end{equation}

\subsection{Estimating Mutual Information via discriminators}\label{subsec:estimation}
While we introduced a new VLB on MI (Eq.~\eqref{eq:ourboud}) that can be estimated from data, we now show that it also enables the construction of an estimator of the true MI, as detailed below. 
Beginning with the joint model of Eq. \eqref{EqModel}, we derive 
$\tilde{p}(z|u,v)$ and then $\frac{\tilde{p}(z=1|u,v)}{\tilde{p}(z=0|u,v)}$. From this, it is easy to show that
\begin{equation}
\INFO{U;V}
= \EV{p_{UV}}{{\mathbb L}(\tilde{p}(z=1 \mid u,v))} \ ,\label{eqKLest}
\end{equation}
where $\mathbb L(\cdot)$ is the Logit transform. Details can be found in Appendix \ref{appendix:MI_classifier}.  

This result suggests a two-step estimation procedure: 1) train a discriminator $q_\theta$ to distinguish between joint and marginal samples; 2) estimate the MI by plugging the approximate  $q_\theta$ into Eq.~\eqref{eqKLest}, and approximate
expectations by data averages. This two-step strategy is equivalent to GAN-DIME \citep{letiziamutual}.

\section{Experiments}
\label{SecExperiments}\label{Sec:Experiments}
In this section, we evaluate the tightness of our proposed MI lower bound, derived from the JSD, and compare it to widely used variational MI estimators. Our evaluation is three-fold. First, we provide an empirical analysis of the lower bound in the discrete setting, where both MI and JSD can be computed exactly. This allows us to assess the tightness of the bound independently of neural approximation errors. Second, we assess the quality of our bound as a variational objective function in neural estimation tasks, using synthetic benchmarks introduced in prior work~\citep{poole_variational_nodate,song_understanding_2020,letiziamutual}. These include Gaussian and non-Gaussian settings designed to challenge both the bias and variance of MI estimators across regimes of increasing complexity and mutual dependence. Finally, we demonstrate the practical usefulness of optimizing JSD as a proxy of MI in the context of the Information Bottleneck (IB) framework.

\subsection{Tightness of the JSD lower bound in the discrete setting}
 
We begin by analyzing the proposed MI lower bound in a controlled discrete setup where both the true MI and the JSD between the joint distribution and the product of marginals are tractable. This allows us to assess whether our general optimal lower bound, defined for arbitrary distributions, remains tight when specialized to the MI setting, i.e. restricted to joint and their product of marginals.

We construct a parameterized family of joint distributions $P_{UV}^{(\alpha)} \in \mathbb{R}^{k \times k}$ that interpolate between the independent and perfectly dependent cases. The marginal distributions are uniform categorical $P_U = P_V = \frac{1}{k} \mathbf{1}_k$, where $k$ is the number of categories. The joint $P_{UV}^{(\alpha)}$ distribution is defined as:
\begin{equation}
P_{UV}^{(\alpha)} = (1 - \alpha) P_U \otimes P_V + \alpha \, \mathrm{diag}(P_U),
\end{equation}
where $\alpha \in [0,1]$ controls the strength of dependence. For each value of $\alpha$, we compute the ground-truth MI $\KLD{P_{UV}^{(\alpha)}}{P_U \otimes P_V}$ and the JSD $\JSD{P_{UV}^{(\alpha)}}{P_U \otimes P_V}$. We then compare $\INFO{U;V}$ with our bound $\Xi(\mathrm{\INFOJS{U;V}})$, where $\Xi$ is the transformation defined in  \eqref{eq:definition_xi}.

Figure~\ref{fig:jsd_bound} shows the resulting trajectories as $\alpha$ varies from 0 (independent variables) to 1 (maximally dependent variables) across set sizes  $k \in \{2, 3, \ldots, 500\}$. For any given JSD value $x$, we observe that there exists a discrete distribution with JS-based information $\INFOJSZ \approx x$ whose true mutual information nearly coincides with the lower bound $\Xi(x)$. This empirically demonstrates the tightness of our proposed bound.
This result is particularly remarkable as it suggests that the optimal lower bound between JSD and KLD introduced in Theorem~\ref{theorem:ours}, originally derived for arbitrary pairs of distributions, remains tight when restricted to mutual information estimation, i.e., when comparing joint distributions to the product of their marginals. In Appendix~\ref{sec:appendix_tightness}, we further show that there exists an \textit{infinite family} of discrete joint–marginal distribution pairs that lie exactly on our bound, thereby demonstrating that the proposed lower bound remains tight even under this restriction. This implies that our JSD-based lower bound offers a tight proxy when maximizing mutual information.

\begin{figure*}[h!]
\begin{minipage}{0.70\textwidth}
\includegraphics[width=0.89\columnwidth, clip]{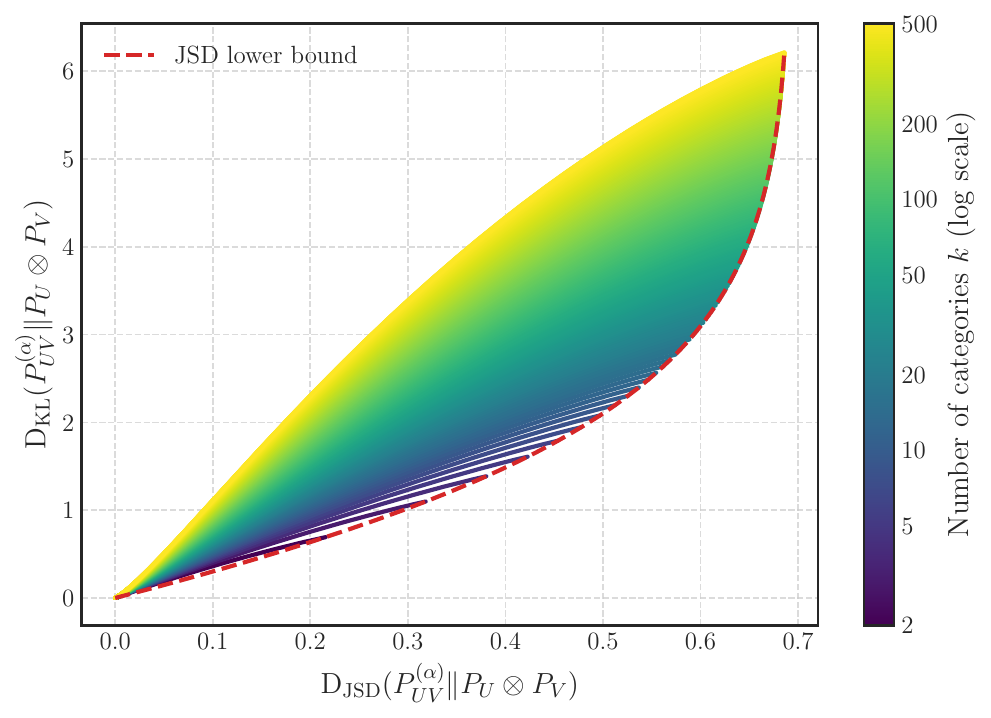}
\end{minipage}
\begin{minipage}{0.29\textwidth}
\caption{
Mutual information and its JSD-based lower bound $\Xi(\INFOJSZ)$ for a parameterized family of discrete joint distributions with known MI and JSD, varying in dependence strength ($\alpha$) and number of categories ($k$). The presence of MIs near the lower bound across settings empirically demonstrates the tightness of our JSD-based estimate.
}
\label{fig:jsd_bound}
\end{minipage}
\end{figure*}

\begin{figure}[t!]
    \centering
    \includegraphics[width=0.95\linewidth, clip, trim={10, 30, 10, 0}]{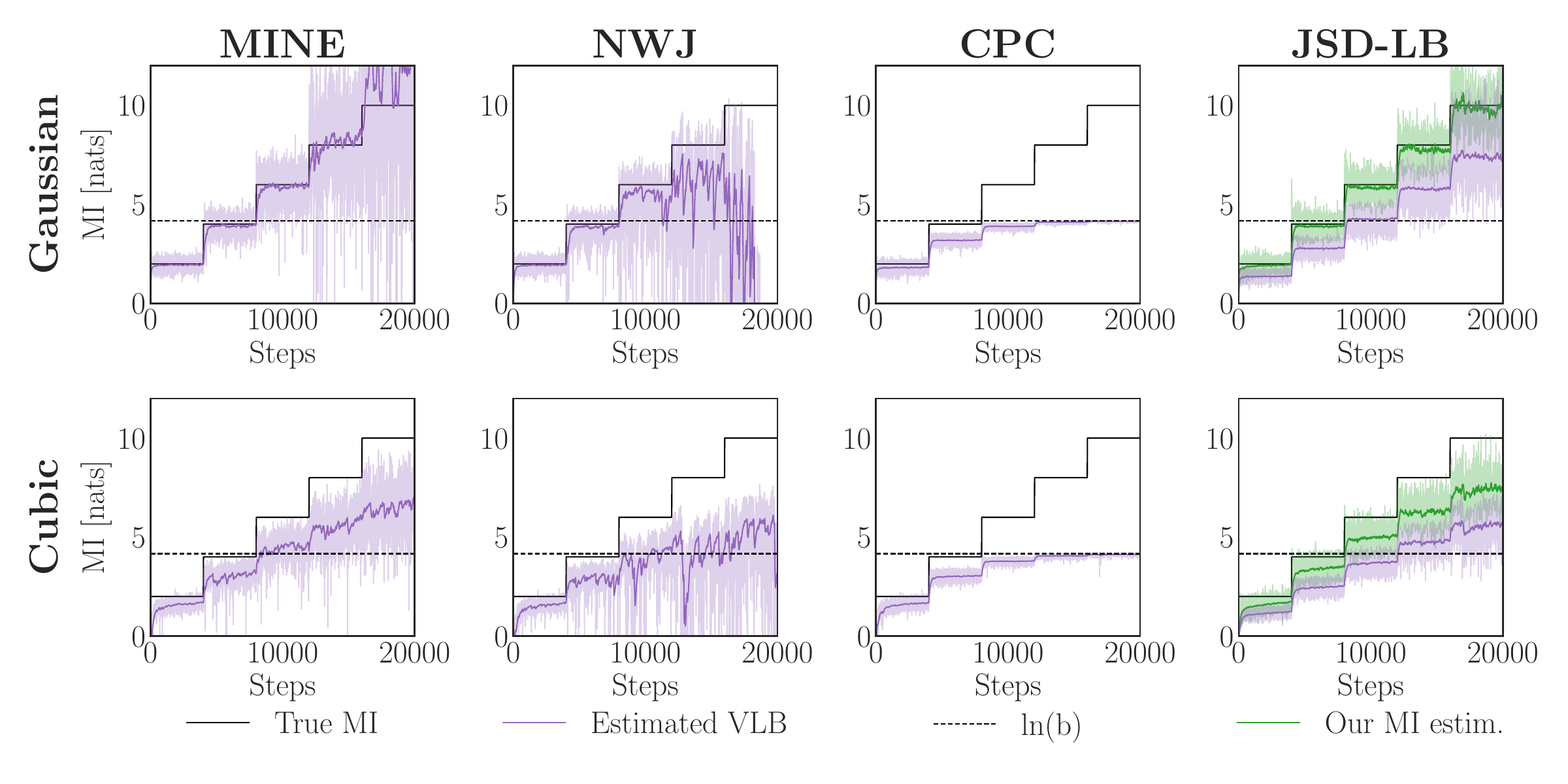}
    \caption{Staircase MI estimation comparison for $d = 5$ and batch size $b=64$. The Gaussian case is reported in the top row, while the cubic case is shown in the bottom row.}
    \label{fig:gaussian_cubic}
    \centering
    \includegraphics[width=0.95\linewidth, clip, trim={10, 30, 10, 0}]{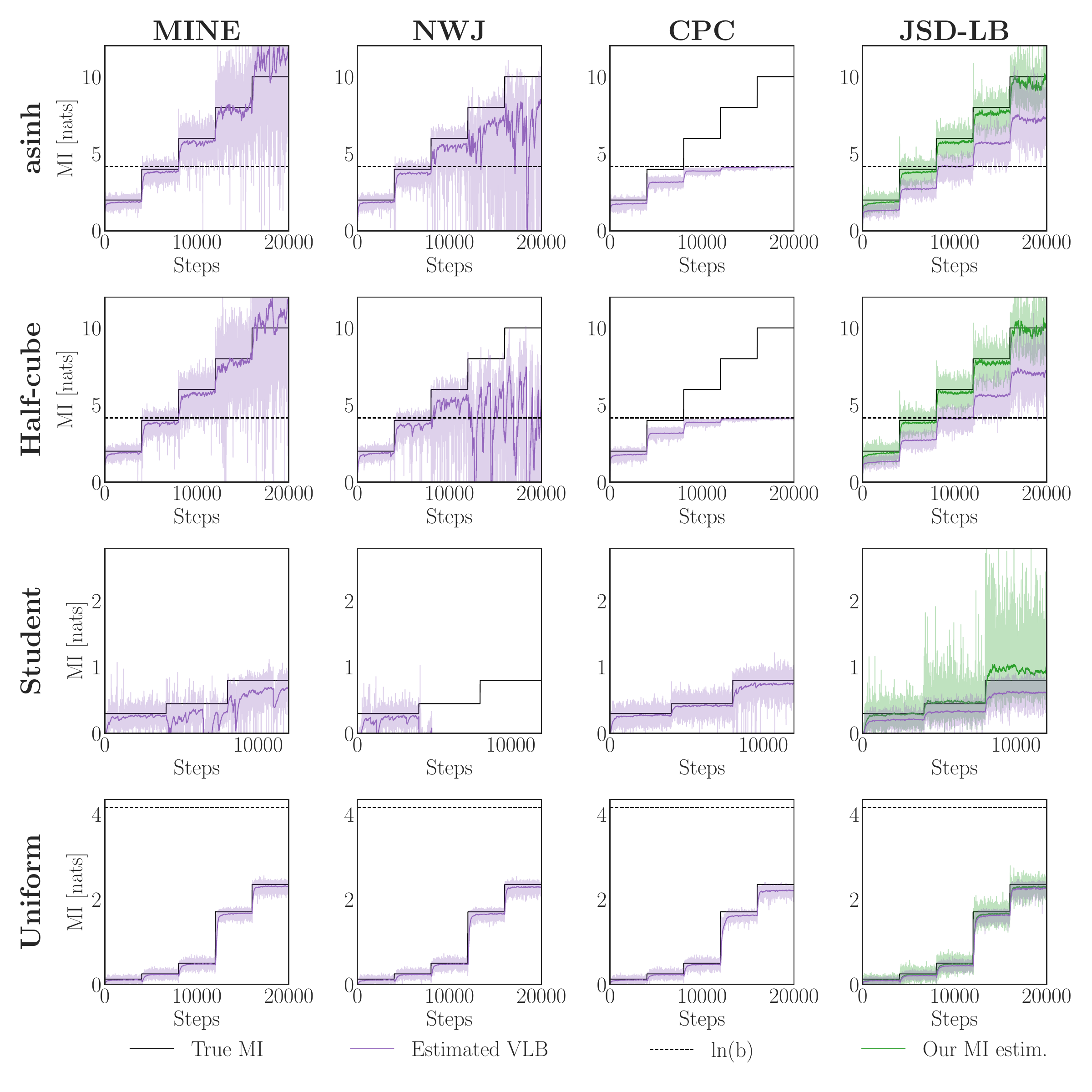}
    \caption{Staircase MI estimation comparison with batch size $b=64$ for the asinh ($d=5$), half-cube ($d=5$), Student ($d=5$), and uniform distributions.}
    \label{fig:uniform_student}
\end{figure}

\subsection{Complex Gaussian and non-Gaussian distributions}\label{sec:experiments_mi}

We next evaluate our lower bound in continuous domains where MI is estimated from samples using neural networks. Our goal is to assess how tight the bound remains when used as a variational objective and how well it compares to other neural MI VLB under complex data distributions.

\paragraph{Setup and metrics} We follow the protocol of \citet{letiziamutual}, who benchmark MI estimators in scenarios where the true MI is known analytically. For each estimator, we report the value of the lower bound objective and the ground truth MI. For our method, we additionally report the estimated MI obtained using the procedure described in Section~\ref{subsec:estimation} and equivalent to GAN-DIME~\cite{letiziamutual}.

All estimators are implemented using a fully connected discriminator with input dimension $2d$, two hidden layers of 256 ReLU units, and one scalar output. Training is performed for 4000 steps using the Adam optimizer and batch size $N = 64$, matching the architecture and hyperparameters from prior work for comparability. As the function $\Xi$ is strictly increasing, maximizing $\Xi\left(\log2 - \LIK_{CE}\right)$ is equivalent to minimizing the cross-entropy loss $\LIK_{CE}$. Therefore, the approximation of $\Xi$ is not used during optimization.  Implementation details\footnote{\url{https://github.com/ReubenDo/JSDlowerbound}} are available in Appendix \ref{appendix:details}.

We compare our approach against other differentiable variational lower bounds that, unlike two-step estimators, can be directly optimized as objective functions for mutual information maximization:
\begin{enumerate}
    \item \textbf{MINE}~\citep{belghazi2018mine}, based on the Donsker-Varadhan dual representation of the KLD;
    \item \textbf{NWJ}~\citep{nguyen2010estimating}, based on another KLD dual representation, which is unbiased but less tight compared to MINE;
    \item \textbf{CPC}~\citep{oord_representation_2019}, based on contrastive predictive coding with the InfoNCE lower bound; 
    \item \textbf{JSD-LB} (\textit{ours}), based on $I_{CE}$ from~\eqref{eq:ourboud} with the estimated expected cross-entropy loss.
\end{enumerate}

\paragraph{Gaussian benchmarks} In the \textbf{Gaussian} setting, we define two random variables $U \sim \mathcal{N}(0, I_d)$ and $N \sim \mathcal{N}(0, I_d)$, sampled independently, and construct $V = \rho \,U + \sqrt{1 - \rho^2}\, N$, where $\rho$ is the correlation coefficient. In the \textbf{cubic} case, we apply a nonlinear transformation $v \mapsto v^3$ to $V$. This transformation introduces non-Gaussian marginals while keeping the same mutual information. In both cases, the MI is known: $\INFO{U;V} = -\frac{d}{2} \log(1 - \rho^2)$.

Figure~\ref{fig:gaussian_cubic} presents our results. In the \textbf{Gaussian} setting, our JSD-LB consistently provides a tight and stable lower bound on the true MI, outperforming MINE, NWJ, and CPC, especially in high-MI regimes. While MINE and NWJ exhibit high variance, JSD-LB and CPC remain more stable. In particular, MINE's instability can lead to significant overestimation of its lower bound, sometimes even exceeding the true MI, whereas CPC remains limited by the contrastive bound 
$\log(b)$. In contrast, the estimated JSD-LB remains below the true MI, as expected for a valid lower bound, yet closely tracks it. Furthermore,  our derived two-step MI estimator from~\eqref{eqKLest}, shown in green in Figures~\ref{fig:gaussian_cubic},~\ref{fig:uniform_student}  and equivalent to GAN-DIME~\cite{letiziamutual}, surpasses other variational lower bound estimators. In the \textbf{cubic} setting, JSD-LB provides a tight lower bound, while maintaining low-variance estimates compared to MINE and NWJ. Moreover, our two-step MI estimator achieves the most accurate MI estimation, outperforming all other VLBs.

\paragraph{Complex Gaussian and non-Gaussian benchmarks} We also replicate the more challenging benchmark from \citet{letiziamutual}, following recommendations from \citet{czyz2023beyond} using complex Gaussian transformations (\textbf{half-cube} and \textbf{asinh}, Fig~\ref{fig:uniform_student}) and non-Gaussian distributions (\textbf{Student} and \textbf{uniform} distributions, Fig~\ref{fig:uniform_student}). Across these settings (see details in Appendix \ref{appendix:details}), our lower bound and its derived MI estimations remain competitive, with lower variance VLBs, confirming that the proposed JSD-based objective generalizes beyond Gaussian settings.

\subsection{Variational Information Bottleneck Experiments}

\begin{table}[t!]
\centering
\caption{Generalization performance (\%) on MNIST dataset. 
Performance is evaluated by the mean classification accuracy on the MNIST test set after training on the MNIST training set.}
\label{tab:rl_gen}
\resizebox{0.95\textwidth}{!}{\begin{tabular}{lcccccc}
\toprule
\textbf{Method} & \textbf{VIB~\cite{alemi2017deep}} & \textbf{NIB~\cite{kolchinsky2019nonlinear}} & \textbf{squared-VIB~\cite{rodriguez2020convex}} & \textbf{squared-NIB~\cite{rodriguez2020convex}} & \textbf{DisenIB~\cite{pan2021disentangled}} & \textbf{JSD-LB} \\ 
\midrule
\textbf{Testing} & 97.6 & 97.2 & 96.2 & 93.3 & 98.2 & \textbf{98.8} \\
\bottomrule
\end{tabular}}
\end{table}
\begin{table}[t!]
\centering
\caption{Adversarial robustness performance ($\%$) on MNIST dataset.}
\label{tab:rl_robustness}
\resizebox{\textwidth}{!}{\begin{tabular}{llcccccc}
\toprule
& \textbf{Method} & \textbf{VIB~\cite{alemi2017deep}} & \textbf{NIB~\cite{kolchinsky2019nonlinear}} & \textbf{squared-VIB~\cite{rodriguez2020convex}} & \textbf{squared-NIB~\cite{rodriguez2020convex}} & \textbf{DisenIB~\cite{pan2021disentangled}} & \textbf{JSD-LB} \\
\midrule
& $\varepsilon = 0.1$ & 74.1 & 75.2 & 42.1 & 61.3 & 94.3 & \textbf{99.5} \\
\textbf{Training} & $\varepsilon = 0.2$ & 19.1 & 21.8 & 8.7 & 24.1 & 81.5 & \textbf{96.0} \\
& $\varepsilon = 0.3$ & 3.5 & 3.2 & 5.9 & 9.3 & 68.4 & \textbf{91.4} \\
\midrule
& $\varepsilon = 0.1$ & 73.4 & 75.2 & 42.7 & 62.0 & 90.2 & \textbf{94.6} \\
\textbf{Testing} & $\varepsilon = 0.2$ & 20.8 & 23.6 & 9.2 & 24.5 & 80.0 & \textbf{89.6} \\
& $\varepsilon = 0.3$ & 4.2 & 3.4 & 5.9 & 9.9 & 67.8 & \textbf{86.1} \\
\bottomrule
\end{tabular}}
\end{table}
\begin{table}[t!]
\centering
\caption{Distinguishing in- and out-of-distribution test data for MNIST image classification ($\%$). \\ $\uparrow$ (resp., $\downarrow$) indicates that a larger (resp., lower) value is better.}
\label{tab:rl_ood}
\resizebox{\textwidth}{!}{\begin{tabular}{lcccccc}
\toprule
\textbf{Method} & \textbf{VIB~\cite{alemi2017deep}} & \textbf{NIB~\cite{kolchinsky2019nonlinear}} & \textbf{squared-VIB~\cite{rodriguez2020convex}} & \textbf{squared-NIB~\cite{rodriguez2020convex}} & \textbf{DisenIB~\cite{pan2021disentangled}} & \textbf{JSD-LB} \\ 
\midrule
TPR95 $\downarrow$ & 27.4 & 34.4 & 49.9 & 47.5 & \textbf{0.0} & \textbf{0.00} \\
AUROC $\uparrow$ & 94.6 & 94.2 & 86.6 & 85.6 & 99.4 & \textbf{100.0} \\
AUPR In $\uparrow$ & 94.8 & 95.2 & 83.5 & 83.3 & 99.6 & \textbf{99.9} \\
AUPR Out $\uparrow$ & 93.7 & 91.8 & 83.2 & 83.3 & 98.9 & \textbf{99.9} \\
Detection Err $\downarrow$ & 11.5 & 11.9 & 20.0 & 15.0 & 1.7 & \textbf{0.1} \\
\bottomrule
\end{tabular}}
\end{table}

To assess the practical utility of our JSD-LB in representation learning, we evaluate it within the Information Bottleneck (IB) framework~\cite{alemi2017deep,pan2021disentangled}, which aims to learn compact yet predictive representations $T$ from inputs $U$ for targets $V$, by maximizing $\INFO{T;V}-\INFO{T;U}$. We reproduce the MNIST experiments of \citet{pan2021disentangled}, where $U$ denotes an image and $V$ its label, using the same architecture and setup. In our setting (JSD-LB), $\INFO{T;V}$ is optimized by maximizing the JSD via the cross-entropy loss. Following \citet{pan2021disentangled}, we assess generalization, adversarial robustness, and out-of-distribution (OOD) detection. Robustness is evaluated by computing the average classification accuracy on adversarially perturbed inputs, where pixel-wise pertubation are obtained using a one-step gradient-based attack with perturbation magnitudes $\epsilon \in \{0.1,0.2,0.3\}$. OOD detection is tested on synthetic Gaussian noise samples, using standard OOD detection metrics including True Positive Rate at $95\%$(TPR95), 
Area Under the ROC Curve (AUROC), Area Under the Precision-Recall Curve (AUPR) and Detection Error.

Our results presented in Tables~\ref{tab:rl_gen},~\ref{tab:rl_robustness}, and ~\ref{tab:rl_ood} show that replacing the MI estimator with JSD-LB leads to state-of-the-art performance in terms of generalization, adversarial robustness, and out-of-distribution robustness. Specifically, it outperforms other Variational Information Bottleneck objectives~\cite{alemi2017deep,kolchinsky2019nonlinear,rodriguez2020convex} and the Disentangled Information Bottleneck approach~\cite{pan2021disentangled}. This set of experiments shows that maximizing JSD via the cross-entropy loss has practical value for representation learning.



\section{Conclusion}
\label{SecConclusion}
In this work, we addressed a fundamental gap in the theory of representation learning: the lack of formal guarantees linking Jensen–Shannon divergence (JSD), widely used in practice, to Kullback–Leibler divergence (KLD), the basis of MI. We derived a new and optimal lower bound on KLD as a function of JSD, and showed how this leads to a provable lower bound on MI when comparing joint and marginal distributions. Furthermore, we demonstrated that minimizing the cross-entropy loss of a binary classifier distinguishing between joint and marginal pairs increases a lower bound on the MI, unifying existing discriminator-based objectives in representation learning with an information-theoretic interpretation. Through extensive experiments in both discrete and continuous domains, we showed that our bound is tight and also performs competitively as a variational objective and as an estimator in practice. Moreover, we demonstrate its usefulness in representation learning within
the Information Bottleneck (IB) framework, reaching state-of-the-art performance. Altogether, this work provides a rigorous theoretical foundation and a practical method for maximizing mutual information through discriminative representation learning.

\section*{Acknowledgment}
\label{SecAcknowledgement} We thank the anonymous reviewers for their insightful comments and helpful suggestions, which improved the quality of this paper. We would also like to thank Yury Polyanskiy for pointing us to the joint range. 
R.D. received a Marie Skłodowska-Curie grant No 101154248 (project: SafeREG). This work was performed using HPC resources from GENCI–IDRIS (Grant 2024-SafeREG, 2025-SafeREG). The research leading to these results has received funding from the program "Investissements d’avenir" ANR-10-IAIHU-06. W.W. is supported by the National Institutes of Health (P41EB028741 and R01EB032387). 

\bibliography{mybib}{}
\bibliographystyle{authordate1}

\newpage

\newpage
\appendix

\section{Appendix: Related MI estimators}
In this section, we present some existing methods and their formulas for estimating MI. As mentioned in the ``Related work'' Section~\ref{SecRelated}, these methods can be categorized as differentiable variational lower bounds and two-step (optimization then estimation) methods.
\subsection{Other differentiable Variational Lower Bounds (VLBs)}
In this subsection, we review some of the main Variational Lower Bounds (VLBs) on MI. These differentiable bounds can be used directly as minimization criteria in the context of representation learning.

\paragraph{MINE~\citep{belghazi2018mine}}
The mutual information neural estimator (MINE) is a VLB on  MI derived from
the Donsker-Varadhan dual representation of the KLD~\cite{poole_variational_nodate}
:
\begin{equation}
    \INFO{U;V} = \sup_{\theta} \left\{ I_{\text{MINE}}(\theta) \doteq \mathbb{E}_{(u,v)\sim p_{UV}(u,v)}\left[T_{\theta}\left( u,v \right)\right] - \log \left(\mathbb{E}_{(u,v) \sim p_U(u) p_V(v)}\left[e^{T_{\theta}\left( u,v \right)}\right] \right)  \right\},
\end{equation}
where $\theta$ are the parameters of the functions $T_{\theta} : \mathcal{U}\times\mathcal{V}\mapsto\mathbb{R}$ which are defined using a deep neural network. A key limitation of MINE is that it considers the logarithm of the expectation, which leads to biased estimation when approximated using data samples.  

\paragraph{NWJ \citep{nguyen2010estimating}}
Another VLB  based on the KLD dual representation:
\begin{equation}
    \INFO{U;V} = \sup_{\theta} \left\{ I_{\text{NWJ}}(\theta) \doteq  \mathbb{E}_{(u,v) \sim p_{U V}(u,v)}\left[T_\theta(u,v)\right]-\mathbb{E}_{(u,v) \sim p_U(u) p_V(v)}\left[e^{T_\theta(u,v)-1}\right] \right\} \PD
\end{equation}
This VLB is not as tight as MINE, i.e. $ I_{\text{NWJ}}(\theta) \leq I_{\text{MINE}}(\theta) \leq \INFO{U;V}$~\citep{mcallester2020formal}, but its estimation is unbiased.

\paragraph{CPC \citep{oord_representation_2019}}
This VLB, based on contrastive predictive coding (CPC), is defined as
\begin{equation}
\label{eq:cpc_appendix}
I_{\text{CPC}}(\theta)=\mathbb{E}_{(u,v) \sim p_{U V; b}(u,v)}\left[\frac{1}{N} \sum_{i=1}^b \log \left(\frac{e^{T_\theta\left(u_i, v_i\right)}}{\frac{1}{b} \sum_{j=1}^b e^{T_\theta\left(u_i, v_j\right)}}\right)\right] \ ,
\end{equation}
where $b$ is the batch size and $p_{UV;b}$ denotes the joint distribution of $b$ i.i.d. samples drawn from $p_{UV}$. While CPC estimates have typically low variances, they are upper bounded by $\log b$, which introduces bias and restricts its ability to estimate high mutual information.

\subsection{Two-step MI estimators}
Alternatively, another class of estimators has been proposed. Unlike the previously  presented VLB, their estimation is two-step. First, a neural network is trained to approximate a function that is later used for estimating the MI. These methods are, therefore, not as easily applied to representation learning, as that would involve solving bilevel (nested) optimization problems.

\paragraph{SMILE ~\citep{song_understanding_2020}} Since MINE suffers from high-variance estimations, \citet{song_understanding_2020} proposed to clip the outputs of the neural network $T_{\theta}$, which estimates the log-density ratio, when estimating the partition function:
\begin{equation}
    I_{\text{SMILE}}(\theta) \doteq \mathbb{E}_{(u,v)\sim p_{UV}(u,v)}\left[T_{\theta}\left( u,v \right)\right] - \log \left(\mathbb{E}_{(u,v) \sim p_U(u) p_V(v)}\left[\text{clip}\left(e^{T_{\theta}\left( u,v \right)}, e^{-\tau}, e^{\tau} \right) \right] \right) 
\end{equation}
where the parameter $\tau\geq 0$ is a hyperparameter and the clip function is defined as follows:
\begin{equation}
    \text{clip}(v,l,u) = \max\left( \min \left(v,u \right),l \right) \PD
\end{equation}
The SMILE estimate $I_{\text{SMILE}}$ is not a VLB but converges to the  $I_{\text{MINE}}$ VLB when $\tau \rightarrow \infty$.

In practice, the implementation of SMILE is significantly different from MINE. Indeed, \citet{song_understanding_2020} proposed to train the neural network $T_{\theta}$, by maximizing a Jensen-Shannon MI estimator from~\citep{nowozin_f-gan_nodate}, as in the Deep InfoMax procedure~\cite{hjelm_learning_2019}. SMILE is, therefore, a two-step estimator.

\paragraph{NJEE~\citep{shalev2022neural}}
The Neural Joint Entropy Estimator (NJEE)~\citep{shalev2022neural} estimates the MI between discrete random variables by modeling entropy estimation as a sequence of classification tasks. Let $U_m$ denote the $m$-th component of a discrete random vector $U \in \mathbb{R}^d$, and define $U^k$ as the subvector containing the first $k$ components of $U$. Let $V$ be another random variable. Given a batch of $b$ samples, NJEE estimates the entropy of $U$ by training a series of classifiers. Specifically, for each $m$, two neural networks $G_{\theta_m}(U_m \mid U^{m-1})$ and $G_{\theta_m}(U_m \mid V, U^{m-1})$ are trained to predict $U_m$ respectively from its preceding components $U^{m-1}$, and from both $V$ and $U^{m-1}$, using cross-entropy as the loss function. Once trained, the NJEE mutual information estimator is computed as:
\begin{equation}
    I_{\text{NJEE}}(U; V) = \hat{H}_b(U_1) + \sum_{m=2}^d \mathrm{CE}\left(G_{\theta_m}(U_m \mid U^{m-1})\right) - \sum_{m=1}^d \mathrm{CE}\left(G_{\theta_m}(U_m \mid V, U^{m-1})\right),
\end{equation}
where $\hat{H}_b(U_1)$ is the estimated marginal entropy of the first component. This two-stage procedure is designed for discrete random variables and does not directly extend to the continuous case.

\paragraph{$f$-DIME~\cite{letiziamutual}}, is an $f$-divergence based two step method that   decouples the optimization of a deep neural network $T_{\theta}$ and the estimation of the MI. Let $f$ denote a convex lower semicontinuous function $f$ that satisfies $f(1)=0$. The Fenchel conjugate of $f$ is denoted $f^{*}$. Specifically, the network $\hat{T}$ is trained to maximize the following quantity:
\begin{equation}
    \mathcal{J}_{f}(T) = \mathbb{E}_{(u,v)\sim p_{UV}(u,v)}\left[ T(u,v) - f^{*}\left( T\left(u,\sigma \left(v \right)\right)\right) \right]
\end{equation}
and the MI is then estimated as:
\begin{equation}
    I_{f\text{-DIME}} = \mathbb{E}_{(u,v)\sim p_{UV}(u,v)} \left[\log \left( \left( f^{*} \right)^{'} \left( \hat{T}(u,v)\right) \right) \right] \PD
\end{equation}

In practice, \citet{letiziamutual} proposed to use three $f$ functions that are related to the Kullback–Leibler divergence, GAN, and Hellinger distance squared:
\begin{align*}
        &f_{KL}:u\rightarrow u\log(u) \\
        &f_{GAN}:u\rightarrow u\log u - (u + 1) \log(u + 1) + \log 4 \\
        &f_{HD}:u\rightarrow \left(\sqrt{u} -1 \right)^2 \PD
\end{align*}
Experimentally, they found that $f_{GAN}$ achieves the best performance. Importantly, the $f_{GAN}$ is closely related to the $f$ function associated with the Jensen-Shannon Divergence defined as:
\begin{equation}
    f_{JSD}:u\rightarrow \frac{1}{2} \left(u\log u - (u + 1) \log(\frac{u + 1}{2}) \right) \ .
\end{equation}

\section{Appendix: Proofs of theorems and derivations}
\subsection{Proof of Theorem 4.1}\label{appendix:theorem}
We begin by deriving the expression for the JSD between two Bernoulli distributions.

\begin{lemma}
\label{theorem:JSD_appendix}
     Let $p=B(\mu)$ and $q=B(\nu)$ be two Bernoulli distributions with parameters $\mu$ and $\nu$.
     Then, the Jensen-Shannon Divergence between $p$ and $q$ is given by:
\begin{multline}
    \JSD{B(\mu)}{B(\nu)} = \log2 + \frac{1}{2}\left[\mu \log \frac{\mu}{\mu+\nu} + \nu \log \frac{\nu}{\mu+\nu} \right] 
    \\
    + \frac{1}{2}\left[(1-\mu)\log \frac{1-\mu}{2-\mu-\nu} + (1-\nu)\log \frac{1-\nu}{2-\mu-\nu}\right] \PD
\end{multline}
\end{lemma}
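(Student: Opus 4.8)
The plan is to evaluate $\JSD{B(\mu)}{B(\nu)}$ straight from its definition $\JSD{p}{q} = \HAF\KLD{p}{m} + \HAF\KLD{q}{m}$ with $m = \HAF(p+q)$, using the key observation that a mixture of two Bernoulli laws is again Bernoulli. Indeed, $m$ assigns probability $\HAF(\mu+\nu)$ to the outcome $1$ and $1-\HAF(\mu+\nu) = \HAF(2-\mu-\nu)$ to the outcome $0$, so $m = B\!\left(\tfrac{\mu+\nu}{2}\right)$.

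Next I would substitute this into the two KL terms using the elementary closed form $\KLD{B(a)}{B(b)} = a\log\tfrac{a}{b} + (1-a)\log\tfrac{1-a}{1-b}$ (with the standard conventions $0\log 0 = 0$ and $0\log\tfrac00 = 0$), which gives
\begin{align*}
\KLD{B(\mu)}{m} &= \mu\log\frac{2\mu}{\mu+\nu} + (1-\mu)\log\frac{2(1-\mu)}{2-\mu-\nu}, \\
\KLD{B(\nu)}{m} &= \nu\log\frac{2\nu}{\mu+\nu} + (1-\nu)\log\frac{2(1-\nu)}{2-\mu-\nu}.
\end{align*}
Plugging both into $\JSD{B(\mu)}{B(\nu)} = \HAF\KLD{B(\mu)}{m} + \HAF\KLD{B(\nu)}{m}$ and splitting each logarithm of a product, $\log\tfrac{2x}{y} = \log 2 + \log\tfrac{x}{y}$, isolates four $\log 2$ contributions carrying coefficients $\mu$, $1-\mu$, $\nu$, $1-\nu$. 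These coefficients sum to $2$, so after the overall factor $\HAF$ they contribute exactly $\log 2$; the remaining ratio terms, still carrying those coefficients and the factor $\HAF$, regroup verbatim into the two bracketed expressions of the statement.

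Every step above is an identity, so there is no genuine obstacle here; the lemma is essentially a direct computation. The only point needing mild care is the behaviour at the degenerate parameter values $\mu,\nu \in \{0,1\}$ — in particular $(\mu,\nu)\in\{(0,0),(1,1)\}$, where $m$ degenerates to a point mass. For those one checks directly that the claimed right-hand side evaluates to $0$ under the $0\log 0$ convention, consistent with $\JSD{p}{p}=0$, and, more generally, that $m$ never has parameter $0$ or $1$ unless $\mu=\nu\in\{0,1\}$, so no infinite terms ever arise in the intermediate KL expressions.
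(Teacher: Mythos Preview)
Your proof is correct and follows essentially the same route as the paper: identify $m=B\!\left(\tfrac{\mu+\nu}{2}\right)$, plug in the Bernoulli KLD formula for both $\KLD{B(\mu)}{m}$ and $\KLD{B(\nu)}{m}$, then split off the $\log 2$ contributions and regroup. Your added remark on the degenerate boundary cases is a small nicety the paper omits.
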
\

\begin{proof}
     The Kullback-Leibler divergence between two Bernoulli distributions $p$ and $q$ with parameters $\mu$ and $\nu$ is given by:
     \begin{equation}
     \label{eq:eq_ber_kl}
     \KLD{p}{q} = \mu \log \frac{\mu}{\nu} + (1-\mu) \log \frac{1- \mu}{1-\nu} \PD
    \end{equation}
    Given that the mixture $\frac{1}{2}(p+q)$ is a Bernoulli distribution with parameter $m=\frac{p+q}{2}$, the JSD is defined as:
    \begin{align*}
        \JSD{B(\mu)}{B(\nu)} &= \frac{1}{2} \KLD{B(\mu)}{B(m)} + \frac{1}{2} \KLD{B(\nu)}{B(m)} \\
        &=\frac{1}{2}\left[ \mu \log \frac{2\mu}{\mu+\nu} + (1-\mu) \log \frac{2(1- \mu)}{2-\mu-\nu}  \right] \\
        & \quad  \quad \quad + \frac{1}{2} \left[\nu \log \frac{2\nu}{\mu+\nu} + (1-\nu) \log \frac{2(1- \nu)}{2-\mu-\nu} \right] \\
        &= \log2 + \frac{1}{2}\left[\mu \log \frac{\mu}{\mu+\nu} +  \nu\log\frac{\nu}{\mu+\nu}  \right]
        \\
        & \quad  \quad \quad+ \frac{1}{2} \left[  (1-\mu)\log \frac{1-\mu}{2-\mu-\nu} + (1-\nu)\log \frac{1-\nu}{2-\mu-\nu}\right] \PD
    \end{align*}
\end{proof}

\begin{lemma}\label{lemma:strict_increasing}
The function $\Xi^{-1}$ defined by
\begin{equation} 
\begin{aligned}
\Xi^{-1} :\  \mathbb{R}_{+} &\longrightarrow [0, \log 2) \\
y &\longmapsto \log 2 - \tfrac{1}{2} \left[ (1 + e^{-y})\log(1 + e^{-y}) + y e^{-y} \right],
\end{aligned}
\label{eq:def_Xi_appendix}
\end{equation}
is $C^{\infty}$, strictly increasing, and satisfies
\begin{equation}
    \Xi^{-1}(0) = 0, 
    \qquad
    \lim_{y \to +\infty} \Xi^{-1}(y) = \log 2.
\end{equation}
Hence, $\Xi^{-1}$ is a bijection from $\mathbb{R}_{+}$ onto $[0, \log 2)$.
\end{lemma}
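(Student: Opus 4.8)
The plan is to treat $\Xi^{-1}$ as an explicit elementary function of the single real variable $y$ and verify the four asserted properties directly, then deduce the bijection claim. First I would observe that $y\mapsto e^{-y}$ is $C^{\infty}$ on $\mathbb{R}$ with values in $(0,\infty)$, so $1+e^{-y}>0$ and hence $y\mapsto\log(1+e^{-y})$ is $C^{\infty}$; since finite sums and products of $C^{\infty}$ functions are $C^{\infty}$, the map in Eq.~\eqref{eq:def_Xi_appendix} is $C^{\infty}$ on $\mathbb{R}_{+}$ (in fact on all of $\mathbb{R}$), and in particular continuous.

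Next I would record the boundary behaviour. At $y=0$ we have $e^{-y}=1$, so the bracket equals $2\log 2+0$ and $\Xi^{-1}(0)=\log 2-\tfrac{1}{2}(2\log 2)=0$. As $y\to+\infty$, $e^{-y}\to 0^{+}$, so $(1+e^{-y})\log(1+e^{-y})\to 1\cdot\log 1=0$ and $y e^{-y}\to 0$ (exponential decay dominating the linear factor), whence $\Xi^{-1}(y)\to\log 2$; moreover $\Xi^{-1}(y)<\log 2$ for every finite $y\ge 0$ since $(1+e^{-y})\log(1+e^{-y})+y e^{-y}>0$ there, so the value $\log 2$ is never attained. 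As a consistency check, substituting $\mu=1$, $\nu=e^{-y}$ into Lemma~\ref{theorem:JSD_appendix} and simplifying the vanishing terms reproduces Eq.~\eqref{eq:def_Xi_appendix}, confirming the identity $\Xi^{-1}(y)=\JSD{B(1)}{B(e^{-y})}$ of Eq.~\eqref{eq:definition_xi}.

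The monotonicity is the crux, and I would get it by differentiation. Writing $g(y)\doteq(1+e^{-y})\log(1+e^{-y})+y e^{-y}$, the product and chain rules give $g'(y)=-e^{-y}\bigl(\log(1+e^{-y})+1\bigr)+e^{-y}(1-y)=-e^{-y}\bigl(\log(1+e^{-y})+y\bigr)$, hence $(\Xi^{-1})'(y)=-\tfrac{1}{2}g'(y)=\tfrac{1}{2}e^{-y}\bigl(\log(1+e^{-y})+y\bigr)$. Using $\log(1+e^{-y})+y=\log\!\bigl(e^{y}(1+e^{-y})\bigr)=\log(1+e^{y})$, this collapses to $(\Xi^{-1})'(y)=\tfrac{1}{2}e^{-y}\log(1+e^{y})$, which is strictly positive for all $y\in\mathbb{R}$ and a fortiori on $\mathbb{R}_{+}$; therefore $\Xi^{-1}$ is strictly increasing there.

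Finally, a continuous strictly increasing function on $[0,\infty)$ that equals $0$ at $0$ and has supremum $\log 2$ (not attained) has image exactly $[0,\log 2)$ by the intermediate value theorem, and strict monotonicity makes it injective; so $\Xi^{-1}$ is a bijection $\mathbb{R}_{+}\to[0,\log 2)$, and its inverse $\Xi$ is a well-defined strictly increasing ($C^{\infty}$) function on $[0,\log 2)$, as required in Theorem~\ref{theorem:ours}. I do not anticipate a real obstacle here; the only steps needing mild care are the algebraic simplification of $g'$ into the manifestly nonnegative form $\tfrac{1}{2}e^{-y}\log(1+e^{y})$ and the two limiting evaluations, which rely only on the standard convention $0\log 0=0$ and on $y e^{-y}\to 0$.
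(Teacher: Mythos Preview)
Your proposal is correct and follows essentially the same approach as the paper: smoothness by composition of elementary $C^{\infty}$ maps, the identical derivative computation culminating in $(\Xi^{-1})'(y)=\tfrac{1}{2}e^{-y}\log(1+e^{y})>0$, the two endpoint evaluations, and bijectivity via strict monotonicity plus continuity. Your write-up is slightly more detailed (explicitly arguing $\log 2$ is never attained and including the consistency check with Eq.~\eqref{eq:definition_xi}), but the argument is the same.
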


\begin{proof}
The function $\Xi^{-1}$ is obtained by finite compositions, additions and multiplication of elementary $C^{\infty}$ functions. Therefore, $\Xi^{-1} \in C^{\infty}(\mathbb{R}_{+})$.

Differentiating, we obtain
\begin{align}
(\Xi^{-1})'(y)
&= -\tfrac{1}{2}\left[-e^{-y}\log(1+e^{-y}) - (1+e^{-y})\tfrac{e^{-y}}{1+e^{-y}} + e^{-y} - y e^{-y}\right] \\
&= \tfrac{1}{2} e^{-y} \left[ \log(1 + e^{y}) \right].
\end{align}
Since $e^{-y} > 0$ and $\log(1 + e^{y}) > 0$ for all $y > 0$, it follows that $(\Xi^{-1})'(y) > 0$. Hence, $\Xi^{-1}$ is strictly increasing on $\mathbb{R}_{+}$.

Moreover, as $y \to +\infty$, $e^{-y} \to 0$ and thus
$\lim_{y \to +\infty} \Xi^{-1}(y) = \log 2$.
At $y = 0$, we have
\[
\Xi^{-1}(0) = \log 2 - \tfrac{1}{2}[(1+1)\log(2) + 0] = 0.
\]
By continuity and strict monotonicity, the image of $\Xi^{-1}$ is therefore $[0, \log 2)$, establishing bijectivity.
\end{proof}

\begin{lemma}
The function $\Xi$, defined as the inverse of $\Xi^{-1}$ in Eq.~\eqref{eq:def_Xi_appendix}, is $C^{\infty}$ and strictly increasing.
\end{lemma}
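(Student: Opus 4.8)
The plan is to invoke the inverse function theorem, leaning on the properties of $\Xi^{-1}$ already established in Lemma~\ref{lemma:strict_increasing}. First I would record the three facts I need: $\Xi^{-1}$ is $C^{\infty}$ on $\mathbb{R}_{+}$, it is a bijection onto $[0,\log 2)$, and its derivative $(\Xi^{-1})'(y) = \tfrac{1}{2}e^{-y}\log(1+e^{y})$ is strictly positive for every $y \ge 0$ — in particular at the endpoint $y=0$, where it equals $\tfrac{1}{2}\log 2 > 0$. Since a continuous strictly increasing bijection between intervals automatically has a continuous strictly increasing inverse, $\Xi$ is well-defined on $[0,\log 2)$, maps onto $\mathbb{R}_{+}$, and is strictly increasing and continuous; this already settles the monotonicity claim.

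For smoothness I would apply the inverse function theorem. Because $(\Xi^{-1})'$ never vanishes on $\mathbb{R}_{+}$, $\Xi$ is differentiable on the open interval $(0,\log 2)$ with $\Xi'(x) = 1/\big((\Xi^{-1})'(\Xi(x))\big)$. To bootstrap from $C^{1}$ to $C^{\infty}$ I would argue by induction: the right-hand side is a composition of the $C^{\infty}$ maps $t \mapsto 1/t$ (legitimate since $(\Xi^{-1})' > 0$), $(\Xi^{-1})' \in C^{\infty}$, and $\Xi$ itself; hence if $\Xi \in C^{k}$ then $\Xi' \in C^{k}$, so $\Xi \in C^{k+1}$, and iterating gives $\Xi \in C^{\infty}$ on $(0,\log 2)$.

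The only mildly delicate point, and the one I would single out as the main obstacle, is the left endpoint $x=0$: the domain $[0,\log 2)$ is only half-open there, so the two-sided inverse function theorem does not literally apply at $0$. I would resolve this by noting that the closed-form expression in Eq.~\eqref{eq:def_Xi_appendix} in fact defines a $C^{\infty}$ function on all of $\mathbb{R}$, whose derivative remains strictly positive on a neighborhood of $0$; applying the inverse function theorem to this extension shows that $\Xi$ extends to a $C^{\infty}$ function across $0$, so in particular $\Xi$ is $C^{\infty}$ (one-sidedly) at $x=0$ as well. Everything else is routine.
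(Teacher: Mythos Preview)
Your proposal is correct and follows essentially the same approach as the paper: invoke the inverse function theorem, using that $\Xi^{-1}$ is $C^{\infty}$ with strictly positive derivative on $\mathbb{R}_{+}$. The paper's proof is a one-line application of this, while you additionally spell out the $C^{k}\Rightarrow C^{k+1}$ bootstrap and handle the endpoint $x=0$ carefully via extension, but the underlying argument is the same.
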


\begin{proof}
Since $\Xi^{-1}$ is $C^{\infty}$ with strictly positive derivative on $\mathbb{R}_{+}$, the inverse function theorem implies that $\Xi$ is $C^{\infty}$ and strictly increasing on $[0, \log 2)$.
\end{proof}

We will now prove the main result of this work.

We denote $\Omega$ the lower triangle ~\sloppy$\Omega=\{(\mu,\nu) \in [0,1]\times(0,1] \mid \nu \leq \mu\}$.

\begin{lemma}
\label{lemma:expression_jacobian}
    Let the mapping $\phi$ be defined as on the lower triangle:
    \begin{equation}
\phi: (\mu, \nu)\in\Omega \mapsto  \left(j(\mu, \nu), k(\mu, \nu) \right)
\end{equation}
where $j$ and $k$ are defined as follows:
\begin{equation}
    j(\mu, \nu) \doteq \JSD{B(\mu)}{B(\nu)} \quad \mbox{and} \quad   k(\mu, \nu) \doteq \KLD{B(\mu)}{B(\nu)} \PD
\end{equation}
Then, the Jacobian $J$ of the mapping $\phi$  is given by:
    \begin{equation}
        J(\mu,\nu) = \begin{bmatrix}
    \frac{\partial j}{\partial \mu} & \frac{\partial j}{\partial \nu} \\
    \frac{\partial k}{\partial \mu} & \frac{\partial k}{\partial \nu} 
    \end{bmatrix}(\mu,\nu) = 
    \begin{bmatrix}
    \frac{1}{2}\left[ \mathbb{L}(\mu)- \mathbb{L}(m) \right] & \frac{1}{2}\left[ \mathbb{L}(\nu)- \mathbb{L}(m) \right] \\
    \mathbb{L}(\mu)- \mathbb{L}(\nu) & -\left[\frac{\mu}{\nu} - \frac{1-\mu}{1-\nu}   \right]
    \end{bmatrix}
    \label{eq:jacobian_appendix}
    \end{equation}
    where $\mathbb{L}$ is the logit function.
\end{lemma}

\begin{proof}
Using Lemma~\ref{theorem:JSD_appendix}, the partial derivative $\frac{\partial j}{\partial \mu}$ is given by:
    \begin{align*}
        \frac{\partial j}{\partial \mu}(\mu,\nu) &= \frac{1}{2}\left[\mu\left(\frac{1}{\mu} - \frac{1}{\mu+\nu}\right) + \log\frac{\mu}{\mu+\nu} - \frac{\nu}{\mu+\nu} \right] \\
        &\quad \quad  \quad  \quad + \frac{1}{2}\left[(1-\mu)\left(\frac{1}{2-\mu-\nu}-\frac{1}{1-\mu}\right) -\log\frac{1-\mu}{2-\mu-\nu} + \frac{1-\nu}{2-\mu-\nu}\right]  \\
        &=\frac{1}{2}\left[  \log\frac{\mu}{\mu+\nu}  -\log\frac{1-\mu}{2-\mu-\nu} + 1 - \frac{\mu}{\mu+\nu} - \frac{\nu}{\mu+\nu} + \frac{1-\mu}{2-\mu-\nu} -1 + \frac{1-\nu}{2-\mu+\nu}  \right] \\
        &= \frac{1}{2}\left[ \log\frac{\mu}{\mu+\nu} -\log\frac{1-\mu}{2-\mu-\nu} \right] \\
        &= \frac{1}{2}\left[ \log\frac{\mu}{1-\mu} - \log\frac{m}{1-m} \right] \\
        &= \frac{1}{2}\left[ \mathbb{L}(\mu)- \mathbb{L}(m) \right] \ .
    \end{align*} 

    By symmetry of the JSD:
    \begin{align}
        \frac{\partial j}{\partial \nu}(\mu,\nu) = \frac{1}{2}\left[ \mathbb{L}(\nu)- \mathbb{L}(m) \right] \ .
    \end{align}

    Finally, using Eq~\eqref{eq:eq_ber_kl}:
    \begin{align}
        \frac{\partial k}{\partial \mu}(\mu,\nu) &=  \log\frac{\mu(1-\nu)}{\nu(1-\mu)} \\
        &=  \mathbb{L}(\mu)- \mathbb{L}(\nu) \ ,
    \end{align}
    and
    \begin{align}
        \frac{\partial k}{\partial \nu}(\mu,\nu) =  -\left[\frac{\mu}{\nu} - \frac{1-\mu}{1-\nu}   \right] \ .
    \end{align}

\end{proof}

\begin{lemma}
\label{lemma_new:appendix_2}
The determinant of the Jacobian $J$ defined in Eq.~\eqref{eq:jacobian_appendix} is negative on the interior
\begin{equation}
\Omega^{\mathrm{o}}=\{(\mu,\nu)\in(0,1)^2:\nu<\mu\},
\end{equation}
i.e.,
\begin{equation}
\forall (\mu,\nu)\in\Omega^{\mathrm{o}},\qquad \det J(\mu,\nu)<0.
\end{equation}
\end{lemma}

\begin{proof}
Let $(\mu,\nu)\in\Omega^{\mathrm{o}}$ and define
\begin{equation}
    x=\frac{\mu}{\nu},
\qquad
y=\frac{1-\mu}{1-\nu}.
\end{equation}

Since $0<\nu<\mu<1$, we have
\begin{equation}
x>1,
\qquad
0<y<1,
\qquad
x-y=\frac{\mu-\nu}{\nu(1-\nu)}>0.
\end{equation}
Moreover,
\begin{equation}
\frac{1+y}{1+x}
=
\frac{\nu(2-\mu-\nu)}{(\mu+\nu)(1-\nu)}.
\end{equation}

Using the formulas established in the lemma~\ref{lemma:expression_jacobian}, we obtain
\begin{align*}
\frac{\partial j}{\partial \mu}(\mu,\nu)
&=
\frac12\log\!\left(\frac{\mu(2-\mu-\nu)}{(\mu+\nu)(1-\mu)}\right)
=
\frac12\log\!\left(\frac{x(1+y)}{y(1+x)}\right),\\[0.4em]
\frac{\partial j}{\partial \nu}(\mu,\nu)
&=
\frac12\log\!\left(\frac{\nu(2-\mu-\nu)}{(\mu+\nu)(1-\nu)}\right)
=
\frac12\log\!\left(\frac{1+y}{1+x}\right),\\[0.4em]
\frac{\partial k}{\partial \mu}(\mu,\nu)
&=
\log\!\left(\frac{\mu(1-\nu)}{\nu(1-\mu)}\right)
=
\log\!\left(\frac{x}{y}\right),\\[0.4em]
\frac{\partial k}{\partial \nu}(\mu,\nu)
&=
-\left(\frac{\mu}{\nu}-\frac{1-\mu}{1-\nu}\right)
=
-(x-y).
\end{align*}

Now set
\begin{equation}
A=\log\!\left(\frac{x}{y}\right),
\qquad
B=\log\!\left(\frac{1+x}{1+y}\right).
\end{equation}
Since $x>1>y>0$, we have $A>0$ and $B>0$. Also,
\begin{equation}
\frac{\partial j}{\partial \mu}=\frac12(A-B),
\qquad
\frac{\partial j}{\partial \nu}=-\frac12 B,
\qquad
\frac{\partial k}{\partial \mu}=A,
\qquad
\frac{\partial k}{\partial \nu}=-(x-y).
\end{equation}
Hence
\begin{align}
\det J(\mu,\nu)
&=
\frac12(A-B)\,[-(x-y)]-\left(-\frac12B\right)A \notag\\
&=
\frac12\Bigl(AB-(x-y)(A-B)\Bigr).
\label{eq:detAB}
\end{align}
Therefore, proving $\det J(\mu,\nu)<0$ is equivalent to proving
\begin{equation}
AB<(x-y)(A-B),
\label{eq:goalAB}
\end{equation}
or, after dividing by the positive quantity $AB(x-y)$,
\begin{equation}
\frac{1}{x-y}
<
\frac{1}{B}-\frac{1}{A}.
\label{eq:goalh}
\end{equation}

Define
\begin{equation}
h(t)=\frac{1}{\log\!\left(\frac{x+t}{y+t}\right)},
\qquad t\ge 0.
\end{equation}
Then
\begin{equation}
h(1)-h(0)=\frac{1}{B}-\frac{1}{A},
\end{equation}
so it suffices to show that
\begin{equation}
h(1)-h(0)>\frac{1}{x-y}.
\end{equation}
By the fundamental theorem of calculus,
\begin{equation}
h(1)-h(0)=\int_0^1 h'(t)\,dt.
\end{equation}
Thus a sufficient condition for proving $\det J(\mu,\nu)<0$ is to show that
\begin{equation}
h'(t)>\frac{1}{x-y}
\qquad\text{for all }0 \leq t \leq 1.
\label{eq:hprimegoal}
\end{equation}

A direct computation gives
\begin{equation}
h'(t)
=
\frac{x-y}{(x+t)(y+t)\,\log^2\!\left(\frac{x+t}{y+t}\right)}.
\end{equation}

Let
\begin{equation}
a=x+t,\qquad b=y+t.
\end{equation}
Then $a>b>0$ and $a-b=x-y$. Condition \eqref{eq:hprimegoal} is equivalent to
\begin{equation}
\frac{a-b}{ab\,\log^2(a/b)}>\frac{1}{a-b},
\end{equation}
that is,
\begin{equation}
(a-b)^2>ab\,\log^2\!\left(\frac{a}{b}\right).
\end{equation}
Writing $z=a/b>1$, this becomes
\begin{equation}
(z-1)^2>z(\log z)^2,
\end{equation}
or equivalently
\begin{equation}
\frac{z-1}{\sqrt z}>\log z.
\label{eq:keyineq}
\end{equation}

To prove \eqref{eq:keyineq}, let $r=\frac12\log z>0$, so that $z=e^{2r}$. Then
\begin{equation}
\frac{z-1}{\sqrt z}
=
\frac{e^{2r}-1}{e^r}
=
e^r-e^{-r}
=
2\sinh r,
\qquad
\log z=2r.
\end{equation}
Hence \eqref{eq:keyineq} is equivalent to
\begin{equation}
\sinh r>r,
\end{equation}
which is true for every $u>0$ since
\begin{equation}
\sinh r-r=\sum_{n\ge 1}\frac{r^{2n+1}}{(2n+1)!}>0.
\end{equation}
Therefore $h'(t)>\frac{1}{x-y}$ for all $t\ge 0$, and so
\begin{equation}
h(1)-h(0)=\int_0^1 h'(t)\,dt>\int_0^1\frac{dt}{x-y}=\frac{1}{x-y}.
\end{equation}
This proves \eqref{eq:goalh}, hence \eqref{eq:goalAB}, and finally, by \eqref{eq:detAB},
\begin{equation}
\det J(\mu,\nu)<0.
\end{equation}

Therefore, for every $(\mu,\nu)\in\Omega^{\mathrm{o}}$, the Jacobian determinant is strictly negative.
\end{proof}

\begin{theorem}
\label{theorem:theorem_ours_appendix}
    The optimal lower bound on Kullback-Leibler divergence established by the Jensen-Shannon divergence is:
\begin{equation}
\label{eq:theorem_ours_appendix}
    \Xi\left(\JSD{p}{q}\right) \leq \KLD{p}{q}
\end{equation}
where $\Xi:[0,\log 2)\mapsto\mathbb{R}^{+}$ is a strictly increasing function defined implicitly by its inverse,
\begin{equation}
    \Xi^{-1}:y\mapsto  \JSD{B(1)}{B\left(\exp\left(-y\right)\right)},
    \label{eq:definition_xi_appendix}
\end{equation}
where $B\left(\exp\left(-y\right)\right)$ is a Bernoulli distribution with parameter $\exp\left(-y\right)\in(1/2,1]$. $\Xi^{-1}$ has a closed-form expression related to Eq.~\eqref{eq:JSD_BB}. 
\end{theorem}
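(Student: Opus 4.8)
The plan is to show that the lower envelope of the joint range $\mathcal{R}_{\JSDZ,\KLDZ}$ is exactly the curve traced by the right edge $S_3$ ($\mu=1$), which by Eq.~\eqref{eq:definition_xi_appendix} is precisely the graph of $\Xi$. By Theorem~\ref{theorem:jointrange} this reduces to two steps: first prove the bound for pairs of Bernoulli distributions via a constrained-optimization argument on the triangle $\Omega$; then pass to the convex hull, which only requires convexity of $\Xi$. The first step is where Conjecture~\ref{conjecture:appendix_2} (nonvanishing of $\det J$) enters, and it is the only non-elementary input.

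\textbf{Step 1: the Bernoulli bound.} Using the symmetries $\JSD{B(\mu)}{B(\nu)}=\JSD{B(1-\mu)}{B(1-\nu)}$ and the analogous one for $\KLDZ$, it suffices to show $k(\mu,\nu)\ge\Xi\!\bigl(j(\mu,\nu)\bigr)$ on $\Omega$, where $j,k$ are as in Eq.~\eqref{eq:jacobian_appendix}. Fix $x\in(0,\log 2)$ (the case $x=0$ forces $p=q$, so both sides vanish and equality holds) and consider $\min\{\,k(\mu,\nu):(\mu,\nu)\in\Omega,\ j(\mu,\nu)=x\,\}$. From Eq.~\eqref{eq:jacobian_appendix}, $\nabla j$ vanishes only on the diagonal $\mu=\nu$, where $j=0$, so the level set $L_x$ is a smooth properly embedded $1$-manifold in $\Omega$. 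At any interior critical point of $k|_{L_x}$ the vectors $\nabla k$ and $\nabla j$ are colinear, which forces $\det J=0$, contradicting Conjecture~\ref{conjecture:appendix_2}. Hence $k|_{L_x}$ has no interior critical point; in particular $L_x$ has no closed component, so it is a single arc along which $k$ is strictly monotone. Tracking $j$ on $\partial\Omega$ (it increases from $0$ to $\log 2$ along each of $S_2$ and $S_3$, and $j\equiv0$ on $S_1$) shows the two ends of $L_x$ are a point of $S_2$ ($\nu\to0$), where $k\to+\infty$, and the point $(1,\nu^\ast)$ of $S_3$ with $j(1,\nu^\ast)=x$, where $k(1,\nu^\ast)=-\log\nu^\ast<\infty$. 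Therefore $\min k|_{L_x}=-\log\nu^\ast$, and since $\Xi^{-1}(-\log\nu^\ast)=\JSD{B(1)}{B(\nu^\ast)}=x$ by Eq.~\eqref{eq:definition_xi_appendix}, we get $-\log\nu^\ast=\Xi(x)$. So $k\ge\Xi\circ j$ on $\Omega$, hence on every pair of Bernoulli distributions.

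\textbf{Step 2: convexification.} By Theorem~\ref{theorem:jointrange}, every realizable pair $(\JSD{p}{q},\KLD{p}{q})$ is a finite convex combination $\sum_i\lambda_i(x_i,y_i)$ with each $(x_i,y_i)$ realized by Bernoullis, so $y_i\ge\Xi(x_i)$ by Step 1. To conclude $\sum_i\lambda_i y_i\ge\Xi\!\bigl(\sum_i\lambda_i x_i\bigr)$ it suffices that $\Xi$ be convex. From Lemma~\ref{lemma:strict_increasing}, $(\Xi^{-1})'(y)=\tfrac{1}{2}e^{-y}\log(1+e^{y})>0$; differentiating once more gives $(\Xi^{-1})''(y)=\tfrac{1}{2}e^{-y}\bigl(\tfrac{e^{y}}{1+e^{y}}-\log(1+e^{y})\bigr)$, and the bracket equals $\tfrac{1}{2}-\log 2<0$ at $y=0$ and has negative derivative, hence is negative for all $y\ge0$. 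Thus $\Xi^{-1}$ is strictly concave; the inverse of a strictly increasing, strictly concave function is strictly convex, so $\Xi$ is convex and Jensen's inequality yields Eq.~\eqref{eq:theorem_ours_appendix}. The bound is optimal since the family $p=B(1)$, $q=B(\nu)$ for $\nu\in(0,1)$ attains equality, tracing out the graph of $\Xi$.

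\textbf{Main obstacle.} The one nontrivial hypothesis is Conjecture~\ref{conjecture:appendix_2}, namely $\det J<0$ on $\Omega^{\mathrm{o}}$: this is exactly what eliminates interior critical points in the constrained minimization of Step 1 and pins the lower envelope onto $S_3$. An analytic proof looks delicate, since the determinant mixes logit terms (such as $\mathbb{L}(\mu)-\mathbb{L}(\nu)$ and $\mathbb{L}(\mu)-\mathbb{L}(m)$) with the rational term $\tfrac{\mu}{\nu}-\tfrac{1-\mu}{1-\nu}$ and the sign does not follow from an evident factorization; I would therefore rely on the numerical verification of Figure~\ref{fig:conjecture_appendix}, as the paper does. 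A secondary, routine point is to make precise the endpoint/connectivity structure of $L_x$, which follows from the monotonicity of $j$ along $\partial\Omega$ sketched above.
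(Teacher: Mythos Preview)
Your proof is correct and shares the paper's overall strategy---reduce to Bernoullis via Harremo\"es--Vajda, then use Conjecture~\ref{conjecture:appendix_2} to force the lower envelope onto $S_3$---but the execution differs in two notable ways. First, where the paper invokes the open mapping theorem (nonvanishing Jacobian $\Rightarrow$ $\phi(\Omega^{\mathrm o})$ is open $\Rightarrow$ the lower envelope of $\phi(\Omega)$ lies in $\phi(\partial\Omega)$, hence on $S_3$), you instead run a constrained-optimization argument on level sets $L_x=\{j=x\}$: no interior Lagrange critical points, so $k|_{L_x}$ is monotone along an arc from $S_2$ to $S_3$, with the minimum at $S_3$. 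These are equivalent uses of $\det J\neq 0$, but your version extracts slightly more (monotonicity of $k$ on each level curve) at the cost of the connectivity bookkeeping you flag as routine. Second, and more substantively, your Step~2 (convexity of $\Xi$ via concavity of $\Xi^{-1}$, checked by $(\Xi^{-1})''(y)=\tfrac{1}{2}e^{-y}\bigl(\tfrac{e^y}{1+e^y}-\log(1+e^y)\bigr)<0$) is content the paper's proof omits: the paper identifies the lower envelope of $\mathcal R_{2;\JSDZ,\KLDZ}$ but does not verify that taking the convex hull does not cut below $\Xi$. Your Jensen step is exactly what is needed to close that gap, so on this point your argument is the more complete one. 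The shared reliance on the numerically supported Conjecture~\ref{conjecture:appendix_2} is, as you note, the main unproved input in both proofs.
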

\begin{proof}
    Following the Theorem~\ref{theorem:jointrange} by \citet{harremoes2011pairs}, the joint range between the JSD and the KLD can be fully characterized by restricting $p$, $q$ to be pairs of Bernoulli distributions compatible with JSD and KLD, 
\begin{equation}
    x = \JSD{B(\mu)}{B(\nu)} \quad \mathrm{and} \quad y = \KLD{B(\mu)}{B(\nu)} \ , 
\end{equation}
for $(\mu,\nu) \in [0,1] \times (0,1) \cup \{(0,0), (1,1)\}$, where $B(\theta)$ is a Bernoulli distribution with parameter~$\theta$.  

We are therefore interested in the image of the mapping $\phi$ defined as:
\begin{equation}
\phi: (\mu, \nu) \mapsto  \left(\JSD{B(\mu)}{B(\nu)}, \KLD{B(\mu)}{B(\nu)} \right)
\end{equation}
from the unit square $[0,1] \times (0,1) \cup \{(0,0), (1,1)\}$ into $\mathbb{R}^2$. 

Exploiting the symmetry of the divergences:
\begin{align}\KLD{B(\mu)}{B(\nu)} &= \KLD{B(1 - \mu)}{B(1-\nu)}, \\\JSD{B(\mu)}{B(\nu)} &= \JSD{B(1 - \mu)}{B(1-\nu)}, 
\end{align}
we restrict our analysis to the image $\phi(\Omega)$ of the lower triangle ~\sloppy$\Omega=\{(\mu,\nu) \in [0,1]\times(0,1] \mid \nu \leq \mu\}$.

Using Lemma~\ref{lemma_new:appendix_2}, the determinant of the Jacobian is negative for all inner points in $\Omega$. According to the open mapping theorem, the inner points are therefore mapped into interior points
of the range. To determine the lower envelope of the image, we examine the image of the boundary $\delta\Omega=(S_1,S_2,S_3)$. Given that, $S_1$ and $S_2$ respectively map to a single point $(0,0)$ and a horizontal line $y=\infty$, $S_3$ defines the lower envelope of the range and is, therefore, our bound.

\subsection{Proof of Eq. (26)}\label{appendix:proof_26}
The variational representation~\citep{nowozin_f-gan_nodate, yury_polyanskiy_book} of can be expressed as:
\begin{equation}
\label{eq:jsd_variational_appendix}
\INFOJS{U;V} = \JSD{p_{UV}}{p_{U}\otimes p_{V}} = \tfrac{1}{2} \max_t \left[ \EV{p_{UV}}{t} - \EV{p_{U} \otimes p_{V}}{-\log(2 - \exp(t))} \right]
\end{equation}
where $t: \mathcal{U} \times \mathcal{V} \to (-\infty,\log 2)$ is a variational function.

Now, define the following reparameterization,
\begin{equation}
\label{eq:substitution_appendix}
t(u,v) \doteq \log(2 q_\theta(z = 1 \mid u, v)),
\end{equation}

where $q_\theta(z=1\mid u, v)$ is a neural network. 

Substituting  \eqref{eq:substitution_appendix} into \eqref{eq:jsd_variational_appendix}, we obtain \eqref{eq:I_JS_q}:
\begin{align}
    \INFOJS{U;V} &= \HAF \max_t \left[ \EV{p_{UV}}{\log(2 q_\theta(z = 1 \mid u, v))} - \EV{p_{U} \otimes p_{V}}{-\log(2 - 2 q_\theta(z = 1 \mid u, v))} \right] \\
    \label{eq:90}
    &=\HAF \max_{\theta} \left[  \log 4 + \EV{p_{UV}}{\log q_{\theta}(z=1|u,v)} + \EV{p_{U} \otimes p_{V}}{\log(1 - q_{\theta}(z=1|u,v))} \right] \\
\end{align}

The expected cross-entropy (CE) under our classifier model in Eq. \eqref{EqModel} is defined as:
\begin{equation}
    \LIK_{CE} \doteq \EV{(u,v,z)\sim p_{UVZ}}{- \log q_{\theta}(z|u,v)}    =
    \EV{z \sim p_Z}{\EV{(u,v)\sim p_{UV|Z}}{- \log q_{\theta}(z|u,v)}}
 \end{equation}
or, expanding the expectation over $z$,
\begin{equation}
     \LIK_{CE} = \HAF \quad  \EV{(u,v)\sim p_{UV}}{- \log q_{\theta}(z=1|u,v)} + 
      \HAF \quad \EV{(u,v)\sim p_{U}\otimes p_V}{- \log q_{\theta}(z=0|u,v)} \PD
\end{equation}
With Eqn. \eqref{eq:90} we obtain
\begin{equation}
    \INFOJS{U;V}  \geq \log 2 - \min_{\theta} \mathcal{L}_{\mathrm{CE}}(\theta) \ 
    \label{eq:ce_JSD_appendix} \PD
\end{equation}

\end{proof}

\subsection{Alternative to Section 4.2 to obtain  (26) and gap quantification}\label{appendix:alternative_proof}
Here, we revisit the relationship between Jensen-Shannon divergence and the cross-entropy training loss.  In our main manuscript, the relationship was derived using properties of $f$-divergences.
The following derivation is more discriminator-oriented, and does not 
use  $f$-divergence properties directly.

\begin{proof}[Alternative proof to \eqref{eq:ce_JSD}]
Recall the classifier model of Equation \eqref{EqModel}. We use a standard binary classification setup to estimate the JSD between the joint distribution $p_{UV}$ and the product of its marginals $p_U\otimes p_V$. Consider the mixture model:
\begin{equation}
(U,V)\mid Z=1 \sim p_{UV}(u,v), \quad (U,V) \mid Z=0 \sim p_U\otimes p_V, \quad \text{with } Z \sim B(1/2).
\label{Eq:22}
\end{equation}
This defines a joint distribution over $(u,v,z)$, denoted $\tilde{p}(u,v,z)$, with marginal over $(u,v)$:
\begin{equation}
    \tilde{p}(u,v)  = m(u,v) \doteq  \tfrac{1}{2} {p}_{UV}(u,v) + \tfrac{1}{2} p_U(u)p_V(v).
\label{Eq:23_appendix}
\end{equation}

Next we study the  MI among $(U,V)$ and $Z$ under that model,
\begin{align}
    \INFO{(U,V);Z} &\doteq \KLD{\tilde{p}_{UVZ}}{p_{UV} \otimes p_{Z}} \\
    &= \EV{(u,v,z) \sim \tilde{p}_{UVZ}}{\log\frac{\tilde{p}_{UVZ}(u,v,z)}{\tilde{p}_{UV}(u,v) p_Z(z)}} \\
    &= \EV{z\sim p_Z}{\EV{(u,v)\sim \tilde{p}_{UV|Z}}{\log\frac{\tilde{p}_{UV|Z} }{m(u,v)}}}
    \\ 
    & \text{(expanding the expectation over $z$)} \\
    &   =
    \HAF \quad \EV{(u,v)\sim p_{UV}}{\log\frac{p_{UV}(u,v)}{m(u,v)}} + 
    \HAF \quad \EV{(u,v)\sim p_U \otimes p_V}{\log\frac{p_U \otimes p_V(u,v)}{m(u,v)}} \PD
\end{align}
Therefore:
\begin{equation}
    \label{Eq:39}
   \INFO{(U,V);Z} = \JSD{p_{UV}}{p_U \otimes p_V} = \INFOJS{U;V} \PD
\end{equation}

From this, we observe that, in the classifier setup above,
the MI between $(U,V)$ and $Z$ equals the Jensen-Shannon divergence
between $U$ and $V$.  A similar property holds for general  JS
divergences. 

MI also characterizes the information loss by conditioning.  In this setup,
\begin{equation}
    \label{Eq:40}
      \INFO{(U,V);Z} = \ENT{Z} - \ENT{Z|U,V} = \log 2 - \ENT{Z|U,V} \ ,
\end{equation}
where the conditional entropy is defined as
\begin{equation}
    \ENT{Z|U,V} \doteq \EV{(u,v,z)\sim \tilde{p}_{UVZ}}{-\log \tilde{p}_{Z|UV}(z|u,v)} \ .
\end{equation}

Combining Equations \eqref{Eq:39} and \eqref{Eq:40} yields
\begin{equation}
  \label{Eq:42}
  \INFOJS{U;V} = \log 2 - \ENT{Z|U,V} \ .
\end{equation}

Suppose we define a variational distribution $q_{\theta}(z|u,v)$ ; then
\begin{align}
    \begin{split}
     \ENT{Z|U,V} &= \EV{(u,v,z)\sim \tilde{p}_{UVZ}}{- \log \tilde{p}_{Z|UV}(z|u,v) + \log q_{\theta}(z|u,v)}  
      \\ &  \qquad \qquad \qquad \qquad \qquad \qquad \qquad + \EV{(u,v,z)\sim \tilde{p}_{UVZ}}{- \log q_{\theta}(z|u,v)}  \end{split} \\
     \begin{split}
    &= \EV{(u,v)\sim \tilde{p}_{UV}}{\EV{z\sim \tilde{p}_{Z|UV}}{- \log \tilde{p}_{Z|UV}(u,v|z) + \log q_{\theta}(z|u,v)}}\\
    & \qquad \qquad \qquad \qquad \qquad \qquad \qquad
         + \EV{(u,v,z)\sim \tilde{p}_{UVZ}}{- \log q_{\theta}(z|u,v)} \ .
         \end{split}
\end{align}
Therefore,
\begin{equation}
     \ENT{Z|UV} = - \underbrace{\EV{(u,v)\sim \tilde{p}_{UV}} {\KLD{\tilde{p}_{Z|UV}(u,v)}{q_{\theta}(z| u,v)}}
 }_{\doteq \delta}+ \underbrace{\EV{(u,v,z)\sim \tilde{p}_{UVZ}}{- \log q_{\theta}(z|u,v)}}_{=\mathcal{L}_{CE}}  \PD  
 \label{eq:ent_appendix}
\end{equation}
The first term $\delta$ on the right-hand side of \eqref{eq:ent_appendix} is the expected KLD between the true posterior $p_{Z \mid UV}$ and the model posterior $q_{\theta}$, which is always non-negative. The second term corresponds to the expected cross-entropy loss of the classifier $q_{\theta}$.

Using the non-negativity of the KL divergence, we obtain the inequality:
\begin{equation}
     \ENT{Z|UV} \geq \LIK_{CE} ,
\end{equation}
where $\LIK_{CE}$ denotes the expected cross-entropy loss. Equality holds if and only if the model distribution $q_{\theta}(z \mid u,v)$ is equal to the true posterior $p_{Z \mid UV}(z \mid u,v)$ almost everywhere.

Combining this with Equation \eqref{Eq:42} yields
\begin{equation}
     \INFOJS{U;V} \geq \log 2 - \LIK_{CE}  \PD
\end{equation}

\end{proof}

Thanks to this new derivation, we can explicitly quantify the gap between the true JSD lower bound $\Xi(\INFOJS{U;V})$ and the one obtained from the cross-entropy $I_{CE}\left(\theta\right)$ as follows:

\begin{lemma}
Let $q_{\theta}$ denote the discriminator, and let $\mathcal{L}_{CE}(\theta)$ be its expected binary cross-entropy.
The gap between our new variational lower bound $I_{CE}(\theta)$ and the optimal lower bound $\Xi\left(\INFOJS{U;V}\right)$ is given by
\begin{equation}
\Xi\left(\INFOJS{U;V}\right) - I_{CE}(\theta) \approx
\delta \cdot \Xi'\left(I_{CE}(\theta)\right),
\end{equation}
where $\delta$ denotes the expected Kullback–Leibler divergence between the true and approximate posteriors:
\begin{equation}
    \EV{(u,v)\sim \tilde{p}_{UV}}
{\KLD{\tilde{p}_{Z|UV}(u,v)}{q_{\theta}(z\mid u,v)}}.
\end{equation}

\end{lemma}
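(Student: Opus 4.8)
The plan is to establish the gap formula by a first-order Taylor expansion of $\Xi$ around the point $I_{CE}(\theta) = \log 2 - \mathcal{L}_{CE}(\theta)$. The starting ingredients are already assembled in the excerpt: from Eq.~\eqref{eq:ent_appendix} we have the exact identity $\ENT{Z|UV} = \mathcal{L}_{CE}(\theta) - \delta$, and combining with Eq.~\eqref{Eq:42} gives the exact relation $\INFOJS{U;V} = \log 2 - \mathcal{L}_{CE}(\theta) + \delta = I_{CE}(\theta)/\text{(no)}$... more precisely $\INFOJS{U;V} = (\log 2 - \mathcal{L}_{CE}(\theta)) + \delta$. Writing $a \doteq \log 2 - \mathcal{L}_{CE}(\theta)$, so that $I_{CE}(\theta) = \Xi(a)$ and $\INFOJS{U;V} = a + \delta$, the quantity of interest is $\Xi(a+\delta) - \Xi(a)$.

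First I would invoke the regularity of $\Xi$ established in the preceding lemmas (that $\Xi$ is $C^\infty$ and strictly increasing on $[0,\log 2)$), so that a Taylor expansion is legitimate: $\Xi(a+\delta) = \Xi(a) + \delta\,\Xi'(a) + O(\delta^2)$. Substituting $a = \log 2 - \mathcal{L}_{CE}(\theta)$ and recognizing $\Xi(a) = I_{CE}(\theta)$ yields
\begin{equation}
\Xi\left(\INFOJS{U;V}\right) - I_{CE}(\theta) = \delta\,\Xi'\!\left(I_{CE}(\theta)\right) + O(\delta^2),
\end{equation}
which is exactly the claimed approximate identity once the higher-order term is dropped. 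I would also remark that $\delta \geq 0$ by non-negativity of the KL divergence and $\Xi' > 0$ by strict monotonicity, so the gap is non-negative, consistent with $I_{CE}(\theta)$ being a valid lower bound on $\Xi(\INFOJS{U;V})$.

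The only subtle point — and the place I would be most careful — is the validity of the expansion near the right endpoint of the domain: as $\mathcal{L}_{CE}(\theta) \to 0$ we have $a \to \log 2$, where $\Xi$ blows up (since $\Xi^{-1}(y)\to\log 2$ only as $y\to+\infty$), so $\Xi'(a)$ is unbounded and the $O(\delta^2)$ control degrades. In a fully rigorous treatment one would either restrict to the regime where $a$ is bounded away from $\log 2$, or state the result as an asymptotic identity in the small-$\delta$ limit with $a$ fixed; since the lemma is stated with ``$\approx$'', presenting it as the leading-order Taylor term with the remainder noted is the natural level of rigor. The main obstacle is therefore not the computation but being explicit about in what sense the approximation holds; everything else is a one-line mean-value-theorem argument applied to the exact identity $\INFOJS{U;V} = I_{CE}(\theta)/\Xi \cdots$, i.e. $\Xi^{-1}(\INFOJS{U;V}) = \Xi^{-1}(I_{CE}(\theta)) + \delta$.
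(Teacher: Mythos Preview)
Your approach is essentially identical to the paper's: you combine Eqs.~\eqref{Eq:42} and \eqref{eq:ent_appendix} to obtain the exact identity $\INFOJS{U;V} = (\log 2 - \mathcal{L}_{CE}(\theta)) + \delta$, then apply a first-order Taylor expansion of $\Xi$ at $a = \log 2 - \mathcal{L}_{CE}(\theta)$, which is precisely what the paper does. Your additional remarks on the regularity of $\Xi$, the sign of the gap, and the breakdown of the approximation near $a \to \log 2$ go beyond the paper's proof but are consistent with it; note also that both you and the paper write $\Xi'(I_{CE}(\theta))$ in the final line where the Taylor expansion actually gives $\Xi'(a) = \Xi'(\log 2 - \mathcal{L}_{CE}(\theta))$, so this is a notational looseness inherited from the lemma statement rather than an error in your argument.
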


\begin{proof}
Combining Equations~\eqref{Eq:42} and~\eqref{eq:ent_appendix}, we obtain
\begin{equation}
\INFOJS{U;V} = \log(2) - \mathcal{L}_{CE}(\theta) + \delta \ .
\end{equation}
Hence, the gap $\Delta_{\theta}$ between the optimal JSD lower bound and our bound $I_{CE}(\theta)$ is
\begin{align}
\Delta_{\theta}
&= \Xi\left(\INFOJS{U;V}\right) - \Xi\left(I_{CE}(\theta)\right) \\
&= \Xi\left(\log(2) - \mathcal{L}_{CE}(\theta) + \delta\right)
- \Xi\left(\log(2) - \mathcal{L}_{CE}(\theta)\right).
\end{align}
Applying a first-order Taylor expansion around $I_{CE}$ leads to:
\begin{equation}
\Delta_{\theta} \approx \delta \cdot \Xi'\left(I_{CE}(\theta)\right).
\end{equation}
\end{proof}

\subsection{Proof of Eq. (28)}\label{appendix:MI_classifier}

We derive here a method for approximating the MI 
by way of the trained classifier, building on the setup of Section 4.2 and \ref{appendix:alternative_proof}.

Under with the joint model of Eq.~\eqref {Eq:23_appendix} (originally \eqref{EqModel}),
\begin{equation}
    \label{eq:conditional_appendix}
    \tilde{p}_{UV|Z}(u,v|z) = \frac{\tilde{p}_{ZUV}(z,u,v)}{p_Z(z)} 
    \quad \mbox{and} \quad  \tilde{p}_{Z|UV}(z|u,v) = \frac{\tilde{p}_{ZUV}(z,u,v)}{ \tilde{p}_{UV}(u,v)} \PD
\end{equation}

Moreover, since  $Z\sim B(1/2)$, we have $p(z=1)=p(z=0)=\frac{1}{2}$. Therefore, 
\begin{equation}
    \log\frac{\tilde{p}_{Z|UV}(z=1|u,v)}{\tilde{p}_{Z|UV}(z=0|u,v)}
    = \log\frac{\tilde{p}_{ZUV}(z=1,u,v)}{\tilde{p}_{ZUV}(z=0,u,v)}
    = \log\frac{\tilde{p}_{UV|Z}(u,v|z=1)}{\tilde{p}_{UV|Z}(u,v|z=0)}
    = \log\frac{p_{UV}(u,v)}{(p_U \otimes p_V)(u,v)} \PD
\end{equation}
Using the definition of KLD,
\begin{equation}
    \KLD{p_{UV}}{p_U\otimes p_V} \doteq \EV{(u,v)\sim p_{UV}} {\log \frac{p_{UV}(u,v)}{(p_U \otimes p_V)(u,v)}}
    = \EV{(u,v)\sim p_{UV}}{ \log\frac{\tilde{p}_{Z|UV}(z=1|u,v)}{\tilde{p}_{Z|UV}(z=0|u,v)}}\CM
\end{equation}
and finally,
\begin{equation}
     \INFO{U;V} \doteq     \KLD{p_{UV}}{p_U\otimes p_V} = \EV{(u,v)\sim p_{UV}} {\mathbb L(\tilde{p}_{Z|UV}(z=1|u,v)}
\end{equation}
where $\mathbb L$ is the logit transform.
Thus we can estimate the MI of $p_{UV}$ by first training 
a classifier to approximate  the true posterior, $p_{Z|UV}$, and after that
use it to approximate the expectation by averaging over the `true' data.

\section{Appendix: Approximation of the function $\Xi$}

In this section,  we derive Equation \eqref{eq:JSD_BB}, which is used to define the inverse $\Xi^{-1}$. This allows the use of numerical solvers to approximate $\Xi$. Finally, we present the approximation of the function $\Xi$ using the Logit transform.

\subsection{Derivation of Equation \eqref{eq:JSD_BB}}
Let's recall \eqref{eq:JSD_BB}:
\begin{equation}
    \JSD{B(1)}{B(\nu)} = \log2 - \frac{1}{2}\left[\log(1+\nu) - \nu \log\left(\frac{\nu}{1+\nu}\right) \right]
\end{equation}

\begin{proof}
    Using Lemma~\eqref{theorem:JSD_appendix}:
\begin{align}
    \JSD{B(1)}{B(\nu)} &= \log2 + \frac{1}{2}\left[ \log \frac{1}{1+\nu} + \nu\frac{\nu}{1+\nu}  + (1-\nu)\log \frac{1-\nu}{1-\nu}\right] \\
    &= \log2 - \frac{1}{2}\left[\log(1+\nu) - \nu \log\left(\frac{\nu}{1+\nu}\right) \right]
\end{align}
\end{proof}

\subsection{Python code for implementing $\Xi$}
\label{appendix:numerical_solver}

\renewcommand{\lstlistingname}{Algorithm}

\begin{minipage}{\linewidth}
\begin{lstlisting}[caption={Algorithmic implementation of the $\Xi$ function as the inverse of its known inverse $\Xi^{-1}$}, label={lst:xi_function}]
from scipy.optimize import root_scalar
import numpy as np

# Define the inverse function Xi(x)
def inv_xi(y):
    z = np.exp(-y)
    term1 = np.log(1 + z)
    term2 = z * np.log(z / (1 + z))
    return np.log(2) - 0.5 * (term1 - term2)

# Define the function Xi(x)
def xi(x):
    result = root_scalar(lambda y: inv_xi(y) - x, bracket=[0, 100], method='brentq')
    return result.root if result.converged else None
\end{lstlisting}
\end{minipage}

\subsection{Proposed differentiable approximation of $\Xi$} \label{appendix:approximation}

To enable the use of our bound within a differentiable objective, we introduce a smooth, differentiable approximation of the $\Xi$ function. This approximation is derived from the Taylor expansion of the KL divergence in terms of the JSD. The lower bound is characterized by the following parametric curve:
\begin{align}
x = \label{eq:JSD_BB_appendix} \JSD{B(1)}{B(\nu)} &= \Xi^{-1}(-\log \nu) \\
y = \KLD{B(1)}{B(\nu)} &= -\log \nu \ ,
\end{align}
where $\Xi:[0, \log 2) \mapsto \mathbb{R}^{+}$ is a strictly increasing function implicitly defined by its inverse:
\begin{equation}
\Xi^{-1}: y \mapsto \log 2 - \frac{1}{2}\left[\left(1 + e^{-y}\right) \log \left(1 + e^{-y}\right) + y e^{-y}\right] \ .
\label{eq:definition_xi_2}
\end{equation}
Hence, we can express:
\begin{equation}
x = \Xi^{-1}(y) \ .
\end{equation}

Let $\mathbb{L}: x \mapsto \log \frac{x}{1 - x}$ denote the Logit function. We then observe that:
\begin{align}
\mathbb{L}\left(\frac{1}{2}\left(\frac{x}{\log 2} + 1\right)\right)
&= \log \left(\frac{\frac{x}{\log 2} + 1}{1 - \frac{x}{\log 2}}\right) \\
&= \log \left(\frac{\frac{\Xi^{-1}(y)}{\log 2} + 1}{1 - \frac{\Xi^{-1}(y)}{\log 2}}\right) \\
&= \log \left(\frac{1 - \frac{1}{2 \log 2}\left[\left(1 + e^{-y}\right)\log\left(1 + e^{-y}\right) + y e^{-y}\right]}{\frac{1}{2 \log 2}\left[\left(1 + e^{-y}\right)\log\left(1 + e^{-y}\right) + y e^{-y}\right]}\right) \\
&= \log \left(\frac{4 \log 2}{\left(1 + e^{-y}\right)\log\left(1 + e^{-y}\right) + y e^{-y}} - 1\right) \label{eq:right_term}
\end{align}

The first-order Taylor expansion of the right-hand side of Eq.~\eqref{eq:right_term} around $y = 0$ is simply $y \mapsto y$. This motivates approximating $\Xi$ using a scaled logit function. Through a grid search, we found that scaling the logit by a factor of 1.15 yields the lowest median approximation error over $x \in (0, 1)$:
\begin{equation}
\label{eq:approximation_appendix}
\Xi(x) \approx 1.15 \cdot \mathbb{L}\left(\frac{1}{2}\left(\frac{x}{\log 2} + 1\right)\right) \ .
\end{equation}

\begin{figure}[ht]
\centering
\includegraphics[width=0.95\linewidth]{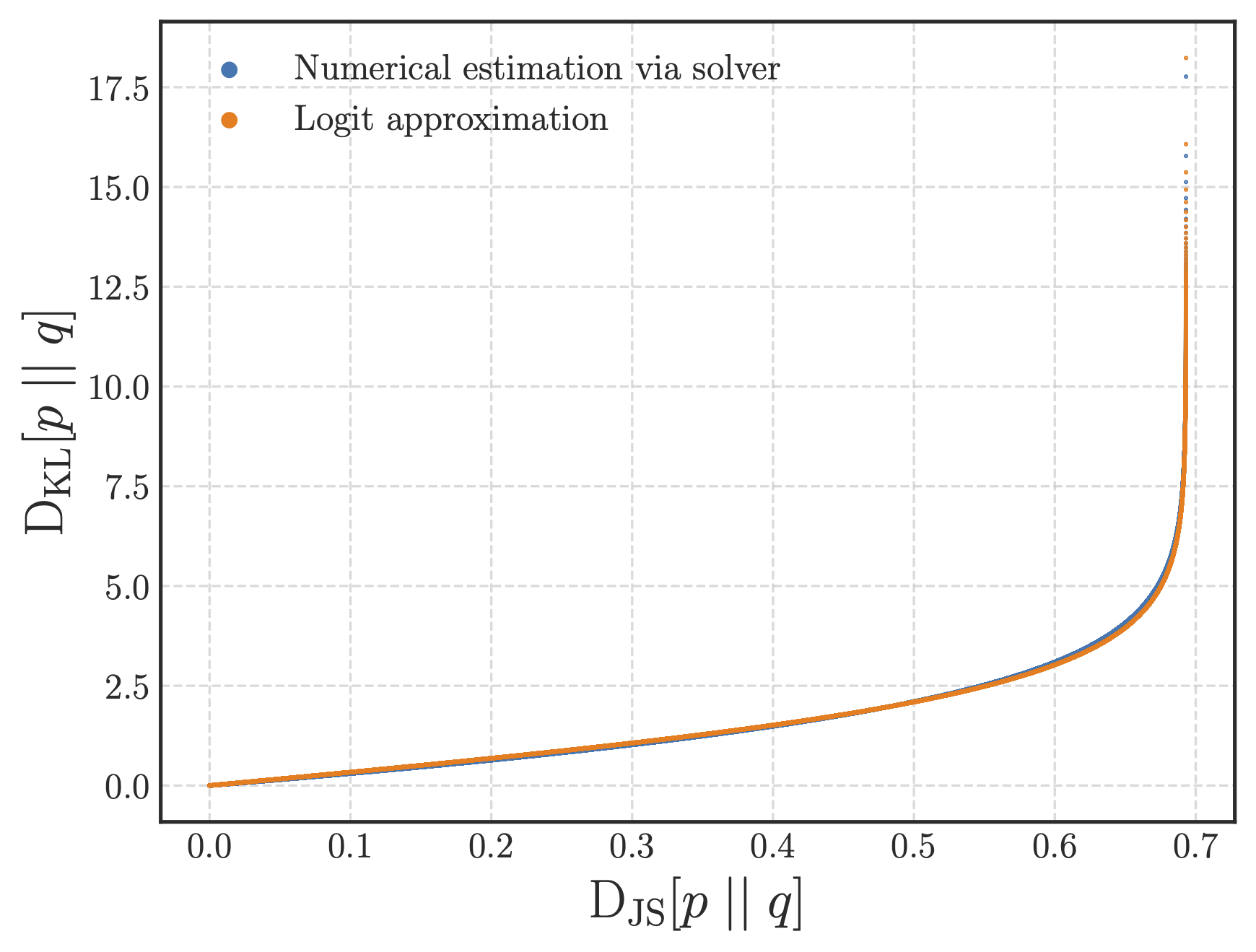}
\caption{Comparison between the numerical evaluation of $\Xi$ using Algorithm~\ref{lst:xi_function} and the proposed differentiable approximation from Eq.~\eqref{eq:approximation_appendix}.}
\label{fig:approximation}
\end{figure}

As shown in Figure~\ref{fig:approximation}, our proposed approximation closely matches the numerically estimated inverse obtained via a solver. Importantly, this formulation is fully differentiable and can be directly integrated into a variational lower bound (VLB) objective in representation learning.

\section{Appendix: Tightness Analysis in the Mutual Information Setting}\label{sec:appendix_tightness}

While our JSD–KLD lower bound in Theorem~\ref{theorem:ours} is the tightest possible (i.e., equality is achieved for a family of distributions)  between these two divergences for arbitrary distributions, one might expect a tighter bound when restricted to the mutual information setting.  i.e., when comparing a joint distribution to the product of its marginals. Remarkably, as illustrated in Figure~\ref{fig:jsd_bound} of the main text, discrete joint–marginal pairs can lie close to our general bound.  
In this section, we analytically show that there exists an infinite family of distributions in the mutual information setting that exactly lie on our lower bound.

\begin{lemma}
Let $(U, V)$ be a pair of uniform categorical variables of dimension $k \in \mathbb{N}^{+}$ that are fully dependent (i.e., $U = V$). Then:
\begin{equation}
    \INFO{U;V} = \Xi\!\left(\INFOJS{U;V}\right) \, .
\end{equation}
\end{lemma}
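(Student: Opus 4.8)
The plan is to compute both sides explicitly for the fully dependent uniform categorical pair and verify they coincide. First I would write down the relevant distributions: the joint $p_{UV}$ is supported on the diagonal of $[k]\times[k]$ with $p_{UV}(i,i)=1/k$, while the product of marginals is $p_U\otimes p_V$ with $(p_U\otimes p_V)(i,j)=1/k^2$ for all $i,j$. The density ratio $dp_{UV}/d(p_U\otimes p_V)$ equals $k$ on the diagonal and is undefined (zero mass for $p_{UV}$) off-diagonal, so $\INFO{U;V} = \KLD{p_{UV}}{p_U\otimes p_V} = \sum_i \tfrac{1}{k}\log k = \log k$. This is the easy half.

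Next I would compute $\INFOJS{U;V} = \JSD{p_{UV}}{p_U\otimes p_V}$. The mixture is $m = \tfrac12 p_{UV} + \tfrac12 (p_U\otimes p_V)$, which assigns mass $\tfrac{1}{2k} + \tfrac{1}{2k^2}$ to each diagonal cell $(i,i)$ and mass $\tfrac{1}{2k^2}$ to each of the $k^2-k$ off-diagonal cells. Then $\JSD = \tfrac12\KLD{p_{UV}}{m} + \tfrac12\KLD{p_U\otimes p_V}{m}$; each KL term is a finite sum over the diagonal (for the first) and over all cells (for the second). Carrying out the algebra, I expect this to simplify — the key observation I would aim for is that this particular pair corresponds, under the Harremoës–Vajda binary reduction, exactly to the Bernoulli pair $B(1)$ versus $B(1/k)$ that traces out the curve $S_3$ defining $\Xi^{-1}$. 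Concretely, collapsing the $k\times k$ grid into ``on the diagonal vs.\ off the diagonal'' is a two-point partition, and $p_{UV}$ puts all its mass on the ``diagonal'' atom while $p_U\otimes p_V$ puts mass $1/k$ there; since both KL and JS between Bernoullis are lower bounds under coarsening and the off-diagonal portion of $p_U\otimes p_V$ is itself uniform (contributing no further divergence once conditioned), the divergences of the grid pair equal those of $B(1)\,\|\,B(1/k)$. The main step is therefore to verify that $\JSD{p_{UV}}{p_U\otimes p_V} = \JSD{B(1)}{B(1/k)}$ and $\KLD{p_{UV}}{p_U\otimes p_V} = \KLD{B(1)}{B(1/k)} = -\log(1/k) = \log k$.

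Given that reduction, the conclusion is immediate from Theorem~\ref{theorem:ours} / Theorem~\ref{theorem:theorem_ours_appendix}: with $\nu = 1/k$ and $y = \KLD{B(1)}{B(1/k)} = \log k$, we have by definition $\Xi^{-1}(\log k) = \JSD{B(1)}{B(1/k)} = \INFOJS{U;V}$, hence $\Xi(\INFOJS{U;V}) = \log k = \INFO{U;V}$, as desired. Letting $k$ range over $\mathbb{N}^+$ then exhibits the claimed infinite family.

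\textbf{Anticipated obstacle.} The only nontrivial point is establishing the identity $D_\bullet(p_{UV}\|p_U\otimes p_V) = D_\bullet(B(1)\|B(1/k))$ for $\bullet \in \{\mathrm{KL}, \mathrm{JS}\}$ cleanly rather than by brute-force summation. The clean argument is that the partition $\{\text{diagonal}, \text{off-diagonal}\}$ is sufficient for the likelihood ratio: the ratio $dp_{UV}/d(p_U\otimes p_V)$ is constant ($=k$) on the diagonal and $p_{UV}$ is null off it, so by the data-processing/decomposition property of $f$-divergences the full divergence equals the divergence of the two-cell pushforward, which is exactly $B(1)$ versus $B(1/k)$ (after the trivial relabeling $\nu\leftrightarrow 1-\nu$ matching the paper's convention on $S_3$). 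I would spell this out for JSD explicitly since its mixture structure makes the ``sufficiency'' slightly less transparent than for KL, but no genuine difficulty arises; the remainder is bookkeeping plus a one-line appeal to the definition of $\Xi^{-1}$.
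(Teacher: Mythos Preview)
Your proposal is correct and arrives at the same endpoint as the paper, but via a genuinely different route. The paper's proof is a direct computation: it writes out the mixture $M$ cell by cell, sums the two KL terms defining the JSD, and algebraically simplifies until the expression matches the closed form of $\Xi^{-1}(\log k)$ from Eq.~\eqref{eq:JSD_BB}. Your approach instead observes that the indicator of the diagonal is a sufficient statistic for the pair $\{p_{UV},\,p_U\otimes p_V\}$ (the likelihood ratio is constant $=k$ on the diagonal and $0$ off it), so the data-processing inequality for $f$-divergences holds with equality, and both KLD and JSD collapse to those of the pushforward pair $B(1)$ versus $B(1/k)$. That is precisely the parametrisation $\nu=e^{-y}$ with $y=\log k$ defining $\Xi^{-1}$, and the conclusion follows by definition.

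Your argument is more conceptual: it explains \emph{why} these joint--marginal pairs lie exactly on the extremal curve $S_3$ (they are, up to a sufficient coarsening, literally the Bernoulli pairs tracing it out), whereas the paper simply verifies it by algebra. The paper's approach is more self-contained, requiring no appeal to the equality case of data processing. One small correction: the relabeling $\nu\leftrightarrow 1-\nu$ you mention is not needed here; the paper's $S_3$ already has $\mu=1$ and $\nu\in(0,1)$, so $\nu=1/k$ matches directly.
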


\begin{proof}
Let $k \in \mathbb{N}^{+}$, and assume $U$ and $V$ are fully dependent (i.e., $U=V$) uniform categorical variables of dimension $k$.  
Their joint and marginal distributions are given by:
\[
P_U = P_V = \tfrac{1}{k} \mathbf{1}_k \,, \quad P_{UV} = \tfrac{1}{k} I_k \,,
\]
and consequently,
\[
P_U \otimes P_V = \tfrac{1}{k^2} \mathbf{1}_{(k,k)} \, .
\]
Let $M = \tfrac{1}{2}\left(P_U \otimes P_V + P_{UV}\right)$ denote the mixture distribution.  
Then $M_{i,j} = \tfrac{1/k + 1/k^2}{2}$ if $i=j$, and $M_{i,j} = \tfrac{1}{2k^2}$ otherwise.

We now compute $\INFO{U;V}$ and $\INFOJS{U;V}$:
\begin{equation}
    \INFO{U;V} = \sum_{i=1}^k \tfrac{1}{k} \log \frac{1/k}{1/k^2} = \log k \, .
    \label{eq:KL_UV_discrete}
\end{equation}
\begin{align}
    \INFOJS{U;V} 
    &= \tfrac{1}{2}\Bigg(
        \sum_{i=1}^k \tfrac{1}{k} \log \frac{1/k}{\tfrac{1}{2}(1/k + 1/k^2)}
        + \sum_{i=1}^k \tfrac{1}{k^2} \log \frac{1/k^2}{\tfrac{1}{2}(1/k + 1/k^2)}
        + \sum_{i=1}^k \sum_{\substack{j=1 \\ j \neq i}}^k \tfrac{1}{k^2} \log \frac{1/k^2}{1/(2k^2)}
      \Bigg) \\
    &= \tfrac{1}{2}\!\left(\log \tfrac{2}{1 + 1/k} + \tfrac{1}{k} \log \tfrac{2/k}{1 + 1/k} + \tfrac{k-1}{k} \log 2\right) \\
    &= \log 2 - \tfrac{1}{2}\!\left[\log(1 + 1/k) - \tfrac{1}{k} \log \tfrac{1/k}{1 + 1/k}\right] \\
    &= \Xi^{-1}(\log k) \, . \label{eq:JS_UV_discrete}
\end{align}

Combining Equations \eqref{eq:KL_UV_discrete} and \eqref{eq:JS_UV_discrete}, we obtain
\begin{equation}
    \INFOJS{U;V} = \Xi^{-1}\!\left(\INFO{U;V}\right) \, ,
\end{equation}
which completes the proof.
\end{proof}

\section{Appendix: Experimental details}

\subsection{Discrete case experiments}

In this section, we present a synthetic experiment designed to empirically validate the tightness of our proposed lower bound between mutual information (MI) and the Jensen-Shannon divergence (JSD), as introduced in Theorem~\ref{theorem:ours}. Unlike general divergences between arbitrary distributions, here we specialize the analysis to the mutual information setting, which corresponds to comparing a joint distribution to the product of its marginals. 

We consider a family of discrete joint distributions $P_{UV}^{(\alpha)} \in \mathbb{R}^{k \times k}$ parameterized by a dependence factor $\alpha \in [0,1]$, which interpolates between the independent and perfectly dependent cases. Specifically, we fix the marginals to be uniform categorical distributions: $P_U = P_V = \frac{1}{k} \mathbf{1}_k$, where $k$ denotes the number of categories. The joint distribution is constructed as a convex combination of the independent and deterministic cases:
\begin{equation}
P_{UV}^{(\alpha)} = (1 - \alpha) P_U \otimes P_V + \alpha \cdot \mathrm{diag}(P_U),
\end{equation}
where $\mathrm{diag}(P_U)$ denotes the diagonal matrix with $P_U$ on its diagonal, ensuring perfect dependence along the diagonal. This construction ensures that:

\begin{itemize}
    \item     When $\alpha = 0$, $U$ and $V$ are independent.
    \item When $\alpha = 1$, $U = V$ with probability one.
\end{itemize}

For each $\alpha$, we compute the mutual information $\mathrm{I}(U;V) = \KLD{P_{UV}^{(\alpha)}}{P_U \otimes P_V}$ and the Jensen-Shannon divergence $\JSD{P_{UV}^{(\alpha)}}{ P_U \otimes P_V}$, and compare them to the lower bound $\Xi(\JSD{P_{UV}^{(\alpha)}}{ P_U \otimes P_V})$, where $\Xi$ is defined in Equation~\eqref{eq:definition_xi}.

\subsection{Complex Gaussian and non-Gaussian distributions experiments} \label{appendix:details}
In this section, we provide further details on the experimental setup described in Section~\ref{sec:experiments_mi} and present supplementary results for the linear and cubic Gaussian experiments.

\subsubsection{Experimental setup}
The neural network architectures employed in our experiments follow the joint architecture, as in~\cite{letiziamutual}. We do not use the deranged architecture from~\cite{letiziamutual}, as it is incompatible with the CPC objective.

\textbf{Joint architecture.} Following the architecture described in~\cite{letiziamutual}, we employ a fully connected feed-forward neural network where the input layer has dimensionality $2d$, corresponding to the concatenation of two $d$-dimensional samples. The network comprises two hidden layers, each with 256 neurons and ReLU activations, and a single scalar output neuron. During training, the network receives as input $b^2$ pairs $(u, v)$ per iteration, where these pairs are formed by taking all combinations of samples $u$ and $v$ independently drawn from the joint distribution $p_{(u,v)}$.

\textbf{Staircase} In the Gaussian setting, we define two random variables $U \sim \mathcal{N}(0, I_d)$ and $N \sim \mathcal{N}(0, I_d)$, sampled independently, and construct $V = \rho \,U + \sqrt{1 - \rho^2}\, N$, where $\rho$ is the correlation coefficient. 

To explore settings with non-Gaussian marginals while preserving the same mutual information, we apply nonlinear transformations to $V$. In the cubic case, we use the mapping $v \mapsto v^3$, which preserves the dependency structure but introduces non-Gaussian marginals. Additionally, we benchmark two alternative transformations: the inverse hyperbolic sine (asinh), which shortens the tails of the distribution, and the half-cube transformation, which stretches them. These mappings allow us to evaluate the robustness of our method under controlled deviations from Gaussianity.

During the training procedure, every 4k iterations, the target value of the
MI is increased by 2 nats, for 5 times, obtaining a target staircase with 5 steps. The change in target
MI is obtained by increasing $\rho$, which affects the true MI according to $\INFO{U;V} = -\frac{d}{2} \log(1 - \rho^2)$.

\textbf{Training procedure.}  Each neural estimator is trained using Adam optimizer, with learning rate $0.002$, $\beta_1 = 0.9$,
$\beta_2 = 0.999$. The batch size is initially set to $b = 64$.

\begin{figure}[t!]
    \centering\includegraphics[width=0.85\linewidth, clip, trim={10, 30, 10, 0}]{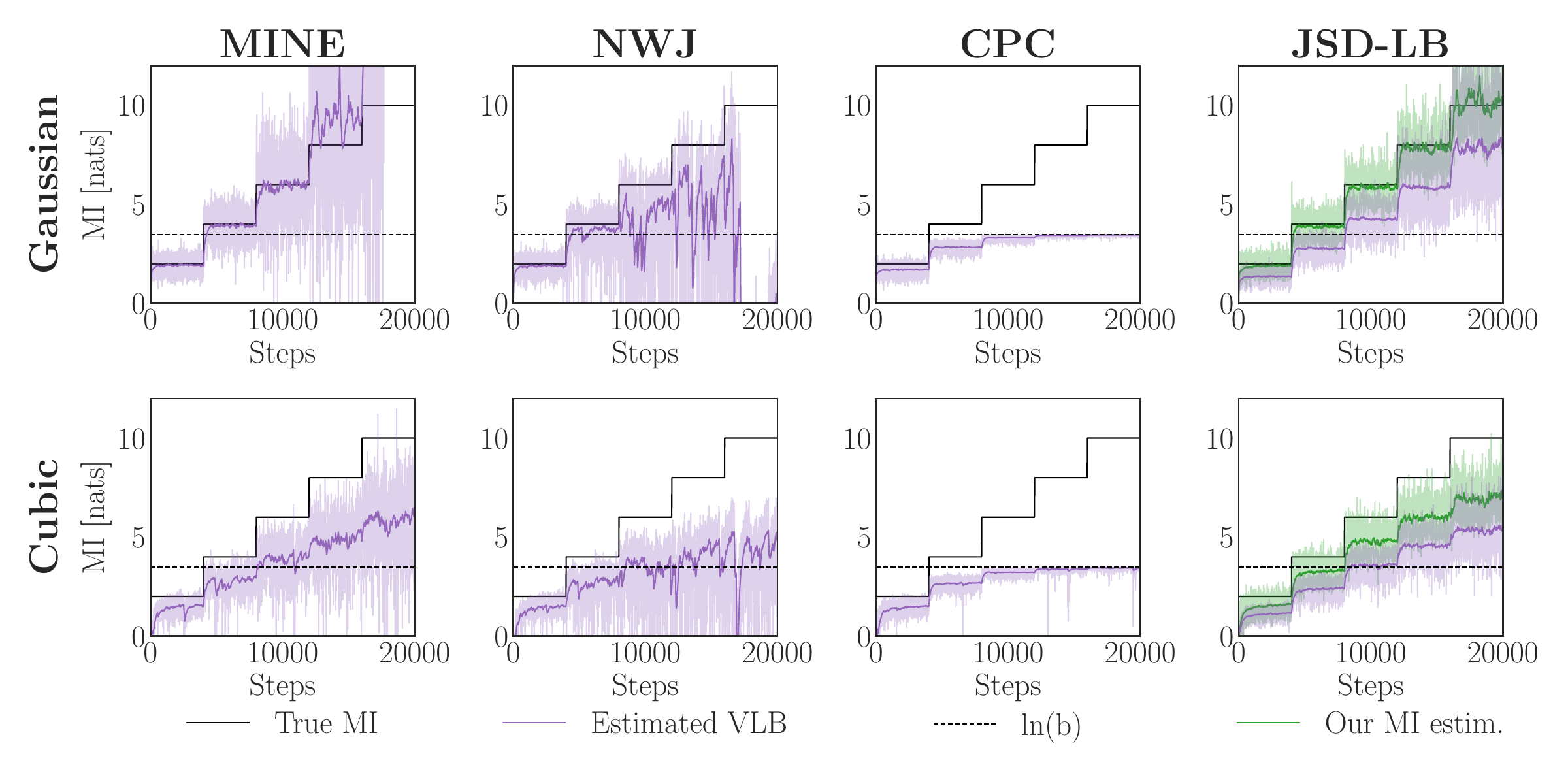}
    \caption{Staircase MI estimation comparison for $d = 5$ and batch size $b=32$. The Gaussian case is reported in the top row, while the cubic case is shown in the bottom row.}
    \label{fig:gaussian_cubic_b=32}
\includegraphics[width=0.85\linewidth, clip, trim={10, 30, 10, 0}]{figures/all_mode-gauss-cubic_latent-5-5_bs-64_arc-joint.pdf}
    \caption{Staircase MI estimation comparison for $d = 5$ and batch size $b=64$. The Gaussian case is reported in the top row, while the cubic case is shown in the bottom row.}
    \label{fig:gaussian_cubic_b=64}
\includegraphics[width=0.85\linewidth, clip, trim={10, 30, 10, 0}]{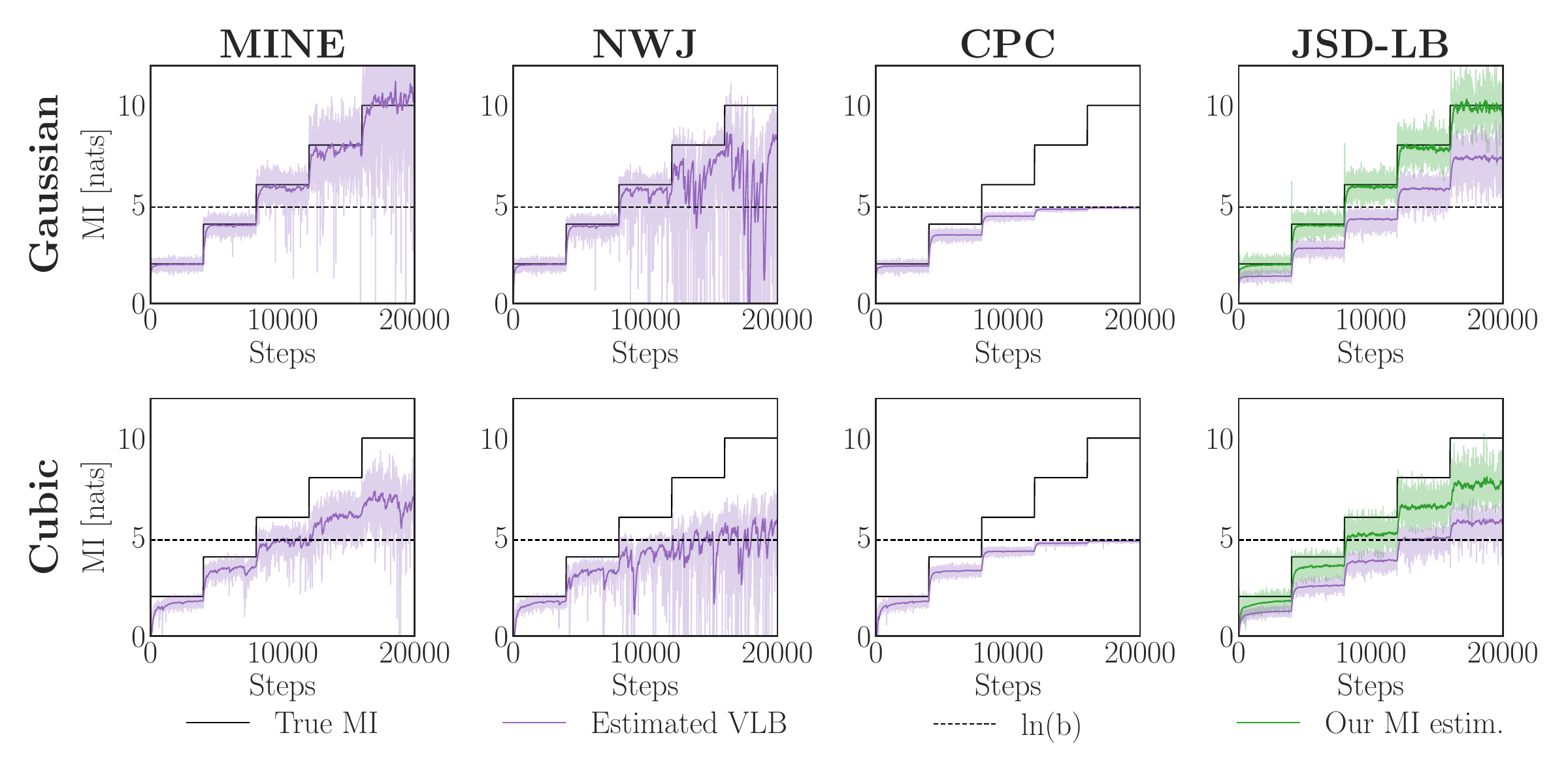}
    \caption{Staircase MI estimation comparison for $d = 5$ and batch size $b=128$. The Gaussian case is reported in the top row, while the cubic case is shown in the bottom row.}
    \label{fig:gaussian_cubic_b=128}
\end{figure}

\subsubsection{Analysis for different values of batch size $b$}
Our VLB is robust to changes in batch size $b$. In Figures~\ref{fig:gaussian_cubic_b=32}, \ref{fig:gaussian_cubic_b=64}, and \ref{fig:gaussian_cubic_b=128}, we observe that MINE and NWJ exhibit high variance, even for large batch sizes (e.g., $b=128$), while JSD-LB and CPC remain more stable. Notably, MINE’s instability can cause substantial overestimation of its lower bound, occasionally exceeding the true MI. CPC, on the other hand, provides low-variance estimates but is upper bounded by the contrastive limit $\log(b)$, which results in higher bias in high-MI regimes.

\subsubsection{Analysis for different values of dimension $d$}
We additionally perform an analysis for different values of dimension $d$. We report the achieved bias, variance, and mean squared error (MSE) corresponding to the four
settings $d\in \{5,10,20,100\}$ in Table~\ref{tab:ablation_dimension}. We experimentally found that MINE produces estimate with high variance, NWJ is unstable in high dimension, and CPC is bounded by $\ln(b)$
When repurposed for two-step MI estimation, our method surpasses other variational lower bound estimators across dimensions. In contrast, while the bias of JSD-LB increases with dimension $d$, its variance remains stable, making it a reliable surrogate for MI maximization. Finally, when repurposed for two-step MI estimation, our method surpasses other variational lower bound estimators across dimensions.

\begin{table*}[ht]

\centering
\caption{Bias, variance, and mean squared error (MSE) of the MI estimators using the joint architecture, when $b=   128$, for the Gaussian setting with varying dimensionality $d \in \{5,10,20,100\}$.}
\label{tab:ablation_dimension}
\begin{tabular}{lccccc|ccccc}
\toprule
& \multicolumn{5}{c|}{$\bm{d=5}$} & \multicolumn{5}{c}{$\bm{d=10}$}  \\
\textbf{MI} & \textbf{2} & \textbf{4} & \textbf{6} & \textbf{8} & \textbf{10} & \textbf{2} & \textbf{4} & \textbf{6} & \textbf{8} & \textbf{10} \\
\midrule 
\multicolumn{11}{l}{\textbf{a) Bias}} \\
 MINE &  0.07 &  0.11 &  0.12 &  0.05 &  -1.58 &  0.12 &  0.19 &  0.32 &  0.12 &  -1.26  \\ 
 NWJ &  0.09 &  0.19 &  0.8 &  1.77 &  $\infty$ &  0.15 &  0.28 &  0.86 &  $\infty$ &  $\infty$  \\ 
 CPC &  0.2 &  0.81 &  2.11 &  3.89 &  5.85 &  0.25 &  0.91 &  2.21 &  3.94 &  5.86  \\ 
 JSD-LB &  0.64 &  1.22 &  1.75 &  2.22 &  2.62 &  0.77 &  1.55 &  2.33 &  3.07 &  3.75  \\ 
 Our MI estim.  &  0.1 &  0.09 &  0.12 &  0.19 &  0.1 &  0.17 &  0.21 &  0.28 &  0.35 &  0.29  \\ 
\midrule
\multicolumn{11}{l}{\textbf{b) Variance}} \\
 MINE &  0.05 &  0.14 &  0.41 &  2.05 &  12.23 &  0.06 &  0.15 &  0.49 &  1.63 &  8.87  \\ 
 NWJ &  0.07 &  0.23 &  4.09 &  12.83 &  $\infty$ &  0.09 &  0.23 &  1.49 &  $\infty$ &  $\infty$  \\ 
 CPC &  0.04 &  0.03 &  0.01 &  0.0 &  0.0 &  0.04 &  0.03 &  0.01 &  0.0 &  0.0  \\ 
 JSD-LB &  0.03 &  0.06 &  0.12 &  0.23 &  0.72 &  0.03 &  0.06 &  0.11 &  0.19 &  0.35  \\ 
 Our MI estim. &  0.07 &  0.13 &  0.22 &  0.36 &  0.93 &  0.08 &  0.15 &  0.25 &  0.44 &  0.73  \\ 
\midrule
\multicolumn{11}{l}{\textbf{c) Mean Square Error}} \\
 MINE &  0.06 &  0.15 &  0.43 &  2.05 &  14.72 &  0.08 &  0.19 &  0.59 &  1.64 &  10.46  \\ 
 NWJ &  0.07 &  0.27 &  4.74 &  15.98 &  $\infty$ &  0.11 &  0.31 &  2.23 &  $\infty$ &  $\infty$  \\ 
 CPC &  0.08 &  0.69 &  4.45 &  15.15 &  34.23 &  0.1 &  0.87 &  4.89 &  15.5 &  34.36  \\ 
 JSD-LB &  0.45 &  1.55 &  3.17 &  5.16 &  7.57 &  0.63 &  2.47 &  5.56 &  9.62 &  14.43  \\ 
 Our MI estim.  &  0.08 &  0.13 &  0.23 &  0.39 &  0.94 &  0.11 &  0.2 &  0.33 &  0.56 &  0.82  \\
 \toprule
& \multicolumn{5}{c|}{$\bm{d=20}$} & \multicolumn{5}{c}{$\bm{d=100}$}  \\
\textbf{MI} & \textbf{2} & \textbf{4} & \textbf{6} & \textbf{8} & \textbf{10} & \textbf{2} & \textbf{4} & \textbf{6} & \textbf{8} & \textbf{10} \\
\midrule 
\multicolumn{11}{l}{\textbf{a) Bias}} \\
 MINE &  0.19 &  0.38 &  0.8 &  0.82 &  0.7 &  1.0 &  1.36 &  2.61 &  3.24 &  4.0  \\ 
 NWJ &  0.23 &  0.48 &  $\infty$ &  $\infty$ &  $\infty$ &  1.13 &  1.67 &  3.05 &  4.1 &  $\infty$  \\ 
 CPC &  0.31 &  1.0 &  2.3 &  4.0 &  5.89 &  0.99 &  1.39 &  2.59 &  4.18 &  5.98  \\ 
 JSD-LB &  0.88 &  1.8 &  2.77 &  3.73 &  4.67 &  1.42 &  2.36 &  3.54 &  4.79 &  6.06  \\ 
 Our MI estim.  &  0.26 &  0.39 &  0.52 &  0.64 &  0.65 &  1.1 &  1.22 &  1.67 &  2.22 &  2.83  \\ 
\midrule
\multicolumn{11}{l}{\textbf{b) Variance}} \\
 MINE &  0.08 &  0.19 &  0.65 &  1.62 &  5.74 &  0.19 &  0.21 &  0.99 &  1.15 &  1.75  \\ 
 NWJ &  0.12 &  0.27 &  $\infty$ &  $\infty$ &  $\infty$ &  0.14 &  0.17 &  1.19 &  1.81 &  $\infty$  \\ 
 CPC &  0.06 &  0.04 &  0.02 &  0.01 &  0.0 &  0.18 &  0.04 &  0.03 &  0.01 &  0.01  \\ 
 JSD-LB &  0.04 &  0.06 &  0.1 &  0.18 &  0.29 &  0.06 &  0.04 &  0.06 &  0.11 &  0.16  \\ 
 Our MI estim. &  0.12 &  0.17 &  0.28 &  0.44 &  0.73 &  0.17 &  0.14 &  0.19 &  0.27 &  0.41  \\ 
\midrule
\multicolumn{11}{l}{\textbf{c) Mean Square Error}} \\
 MINE &  0.12 &  0.34 &  1.29 &  2.29 &  6.23 &  1.18 &  2.05 &  7.81 &  11.62 &  17.76  \\ 
 NWJ &  0.17 &  0.5 &  $\infty$ &  $\infty$ &  $\infty$ &  1.41 &  2.95 &  10.46 &  18.61 &  $\infty$  \\ 
 CPC &  0.15 &  1.04 &  5.32 &  15.97 &  34.65 &  1.15 &  1.96 &  6.72 &  17.47 &  35.83  \\ 
 JSD-LB &  0.82 &  3.28 &  7.75 &  14.12 &  22.06 &  2.07 &  5.6 &  12.61 &  23.05 &  36.89  \\ 
 Our MI estim.  &  0.19 &  0.32 &  0.55 &  0.86 &  1.15 &  1.39 &  1.62 &  2.99 &  5.18 &  8.41  \\ 
 \bottomrule
 \end{tabular}
\end{table*}

\subsubsection{Comparison of our two-step estimator with other two-step estimators}
While our primary contribution is a novel variational lower bound (JSD-LB) that can be directly used for MI maximization, we additionally report results for our two-step estimator (Our MI estim.) alongside other two-step approaches to provide better context. Comparisons with SMILE, KL-DIME, and GAN-DIME are presented in Tables~\ref{tab:comparision_baseline} and \ref{tab:comparision_baseline_d20}. The results show that our derived two-step MI estimator, which is equivalent to GAN-DIME up to a linear transformation, achieves competitive performance.

\subsubsection{Computational time analysis}
A critical characteristic is the computation time. The computational time analysis is developed on a server with CPU ``Intel Xeon Platinum 8468 48-Core Processor'' and an NVIDIA GPU H100 and reported in Table~\ref{tab:computational time}. 
As in \cite{letiziamutual}, we found that the influence of the MI lower bound objective functions on the algorithm’s time requirements is minor.

\begin{table*}[ht!]
\caption{Bias, Variance, and Mean Square Error of SMILE, KL-DIME, GAN-DIME, and Our MI estimator across different $b$, and MI values for $d=5$.}
\label{tab:comparision_baseline}
\centering
\resizebox{\textwidth}{!}{
\begin{tabular}{lccccc|ccccc}
\toprule
& \multicolumn{5}{c|}{$\bm{b=64}$} & \multicolumn{5}{c}{$\bm{b=256}$} \\
\cmidrule{2-6} \cmidrule{7-11}
\textbf{MI} & \textbf{2} & \textbf{4} & \textbf{6} & \textbf{8} & \textbf{10} & \textbf{2} & \textbf{4} & \textbf{6} & \textbf{8} & \textbf{10} \\
\midrule
\multicolumn{11}{l}{\textbf{a) Bias}} \\
 SMILE &  -0.23 &  -0.59 &  -0.73 &  -0.68 &  -0.65 &  -0.28 &  -0.65 &  -0.81 &  -0.78 &  -0.68  \\
 KL-DIME &  0.09 &  0.12 &  0.22 &  0.44 &  0.9 &  0.05 &  0.05 &  0.09 &  0.19 &  0.41  \\
 GAN-DIME &  0.1 &  0.1 &  0.13 &  0.24 &  0.09 &  0.06 &  0.05 &  0.07 &  0.12 &  0.01  \\
 Our MI estim.  &  0.1 &  0.09 &  0.12 &  0.19 &  0.1 &  0.06 &  0.05 &  0.07 &  0.12 &  0.04 \\
\midrule
\multicolumn{11}{l}{\textbf{b) Variance}} \\
 SMILE &  0.08 &  0.13 &  0.22 &  0.33 &  0.72 &  0.03 &  0.07 &  0.1 &  0.18 &  0.3 \\
 KL-DIME &  0.06 &  0.07 &  0.08 &  0.11 &  0.17 &  0.03 &  0.02 &  0.02 &  0.03 &  0.05  \\
 GAN-DIME &  0.07 &  0.13 &  0.22 &  0.37 &  0.92 &  0.03 &  0.06 &  0.11 &  0.18 &  0.4 \\
 Our MI estim.  &  0.07 &  0.13 &  0.22 &  0.36 &  0.93 &  0.03 &  0.06 &  0.1 &  0.2 &  0.43 \\
\midrule
\multicolumn{11}{l}{\textbf{c) Mean Square Error}} \\
 SMILE &  0.13 &  0.48 &  0.74 &  0.8 &  1.14 &  0.11 &  0.49 &  0.75 &  0.8 &  0.77  \\
 KL-DIME &  0.07 &  0.08 &  0.13 &  0.3 &  0.97 &  0.03 &  0.02 &  0.03 &  0.07 &  0.22  \\
 GAN-DIME &  0.08 &  0.14 &  0.23 &  0.43 &  0.92 &  0.04 &  0.07 &  0.11 &  0.2 &  0.4  \\
 Our MI estim.  &  0.08 &  0.13 &  0.23 &  0.39 &  0.94 &  0.04 &  0.07 &  0.11 &  0.22 &  0.43 \\
\bottomrule
\end{tabular}}
\end{table*}

\begin{table*}[ht!]
\caption{Bias, Variance, and Mean Square Error of SMILE, KL-DIME, GAN-DIME, and Our MI estimator across different $b$, and MI values for $d=20$.}
\label{tab:comparision_baseline_d20}
\centering
\begin{tabular}{lccccc|ccccc}
\toprule
& \multicolumn{5}{c|}{$\bm{b=64}$} & \multicolumn{5}{c}{$\bm{b=256}$} \\
\cmidrule{2-6} \cmidrule{7-11}
\textbf{MI} & \textbf{2} & \textbf{4} & \textbf{6} & \textbf{8} & \textbf{10} & \textbf{2} & \textbf{4} & \textbf{6} & \textbf{8} & \textbf{10} \\
\midrule
\multicolumn{11}{l}{\textbf{a) Bias}} \\
 SMILE &  -0.08 &  -0.24 &  -0.25 &  -0.24 &  -0.16 &  -0.17 &  -0.38 &  -0.43 &  -0.39 &  -0.28 \\
 KL-DIME &  0.24 &  0.36 &  0.62 &  1.0 &  1.57 &  0.16 &  0.2 &  0.34 &  0.58 &  0.92 \\
 GAN-DIME &  0.26 &  0.38 &  0.54 &  0.67 &  0.72 &  0.19 &  0.26 &  0.35 &  0.48 &  0.5 \\
 Our MI estim.  &  0.26 &  0.39 &  0.52 &  0.64 &  0.65 &  0.19 &  0.26 &  0.36 &  0.47 &  0.53 \\
\midrule
\multicolumn{11}{l}{\textbf{b) Variance}} \\
 SMILE &  0.12 &  0.16 &  0.27 &  0.46 &  0.77 &  0.06 &  0.07 &  0.12 &  0.2 &  0.32 \\
 KL-DIME &  0.1 &  0.11 &  0.15 &  0.19 &  0.22 &  0.05 &  0.03 &  0.04 &  0.05 &  0.07 \\
 GAN-DIME &  0.12 &  0.17 &  0.26 &  0.41 &  0.73 &  0.06 &  0.08 &  0.12 &  0.2 &  0.36 \\
 Our MI estim.  &  0.12 &  0.17 &  0.28 &  0.44 &  0.73 &  0.06 &  0.08 &  0.12 &  0.2 &  0.32 \\
\midrule
\multicolumn{11}{l}{\textbf{c) Mean Square Error}} \\
 SMILE &  0.12 &  0.22 &  0.33 &  0.52 &  0.79 &  0.08 &  0.21 &  0.3 &  0.35 &  0.41 \\
 KL-DIME &  0.16 &  0.24 &  0.53 &  1.19 &  2.69 &  0.07 &  0.07 &  0.16 &  0.38 &  0.91 \\
 GAN-DIME &  0.19 &  0.32 &  0.55 &  0.86 &  1.25 &  0.09 &  0.15 &  0.24 &  0.43 &  0.61 \\
 Our MI estim.  &  0.19 &  0.32 &  0.55 &  0.86 &  1.15 &  0.09 &  0.14 &  0.25 &  0.42 &  0.6 \\
\bottomrule
\end{tabular}
\end{table*}

\begin{table*}[ht]
\centering
\caption{Comparison of the time requirements (in minutes) to complete the 5-step staircase MI ($d=5$) over the batch size.}
\label{tab:computational time}
\begin{tabular}{lcccccc}
\toprule
& $\bm{b=32}$ & $\bm{b=64}$ & $\bm{b=128}$ & $\bm{b=256}$ & $\bm{b=512}$ &  $\bm{b=1024}$ \\
\midrule
 MINE & 0.54 & 0.46 & 0.52 & 1.0 & 2.67 & 9.07 \\
 NWJ & 0.39 & 0.4 & 0.46 & 0.94 & 2.61 & 9.02 \\
 CPC & 0.39 & 0.37 & 0.42 & 0.91 & 2.59 & 8.99 \\
 JSD-LB & 0.45 & 0.48 & 0.51 & 0.96 & 2.62 & 9.05 \\
\bottomrule
\end{tabular}
\end{table*}

\HIDE{
\section{Appendix: Experimental details}

\subsection{Discrete case experiments}

\subsection{Multivariate Linear and nonlinear Gaussians experiments}

\subsubsection{Experimental setup}

\subsubsection{Analysis for Different values of $d$ and $b$}

\subsubsection{Analysis for other architectures: joint vs deralngement}

\newpage
\section{Other VLB Objectives}
\section{Proofs}
\subsection{Proof of Lower Bound (main result)}

\paragraph{Jacobian}
For convenience, define:
    
\begin{equation}
    j(\mu, \nu) \doteq \JSD{B(\mu}{B(\nu)} \quad \mbox{and} \quad   k(\mu, \nu) \doteq \KLD{B(\mu}{B(\nu)} 
\end{equation}

The partial derivatives of the transform we are studying are:
\begin{equation}
    \frac{\partial x}{\partial \mu} = \frac{\partial j(\mu, \nu)}{\partial \mu}\\
    = \frac{1}{2}\left[\ln \frac{\mu}{m} - \ln\frac{\overline{\mu}}{\overline{m}}\right]
    = \HAF \left[ \mathbb{L}(\mu) - \mathbb{L}(m) \right]
\end{equation}
\begin{equation}
    \frac{\partial x}{\partial \nu} = \frac{\partial j(\mu, \nu)}{\partial \nu}\\
    = \frac{1}{2}\left[\ln \frac{\nu}{m} - \ln\frac{\overline{\nu}}{\overline{m}}\right]
    = \HAF \left [\mathbb{L}(\nu) - \mathbb{L}(m) \right]
\end{equation}
\begin{equation}
    \frac{\partial y}{\partial \mu} = \frac{\partial k(\mu, \nu)}{\partial \mu}\\
    = \left[\ln \frac{\mu}{\nu} - \ln\frac{\overline{\mu}}{\overline{\nu}}\right]
    = \mathbb{L}(\mu) - \mathbb{L}(\nu)
\end{equation}
\begin{equation}
    \frac{\partial y}{\partial \nu} = \frac{\partial k(\mu, \nu)}{\partial \nu}\\
    = - \left[ \frac{\mu}{\nu} - \frac{\overline{\mu}}{\overline{\nu}}\right]
    = \frac{\nu - \mu}{\nu (1- \nu)}
\end{equation}

\newpage
If we show these two results (which are valid experimentally, we are good):
\begin{equation}
    1. \quad \left[ \frac{\mu}{\nu} - \frac{\overline{\mu}}{\overline{\nu}}\right] \geq \frac{1}{2} \left( \mathbb{L}(m) - \mathbb{L}(\nu) \right)
\end{equation}

\begin{equation}
     2. \quad \mathbb{L}(\mu) - \mathbb{L}(\nu) \geq \frac{1}{2} \left( \mathbb{L}(\mu) - \mathbb{L}(m) \right)
\end{equation}

where $\overline{x} \doteq (1-x)$ and $m\doteq??$ The first three can be shown to be positive, and using $\mu>\nu$,
the fourth can be shown to be negative.  Then the Jacobian determinant
is negative. 

{\bf need to comment on edge cases}

Claim: the region $\Omega$ is bounded by $\{S_1, S_2, S_3\}$, 
thus the region $\phi(\Omega)$ is bounded by $\{\phi(S_1), \phi(S_2) ,\phi(S_3)\}$.

Because the topology is preserved under the map $\phi$, that is because
the Jacobian determinant is non zero in $\Omega$

\subsection{Details of $f$-Divergence Derivation of MI by discriminator}

\subsection{Estimating JSD from Data}
{\bf old version}
KLD is defined as 
\begin{equation}
    \KLD{f}{g} \doteq \EV{x \sim p(f,g)}{\ln \frac{f(x)}{g(x)}} \PD
\end{equation}

Under the classifier model described above, 
The optimal classifier on $z$ given $x$, $p(z|x)$, can easily be
derived from the joint model specified in Eq. \eqref{EqModel2} 
from the definition of conditional entropy.   It is
\begin{equation}
    p(z|x) = \frac{p(z,x)}{p(x)} 
\end{equation}

Note also that
\begin{equation}
    p(x|z) = \frac{p(z,x)}{p(z)}
\end{equation}

Then examining
\begin{equation}
    \ln \frac{p(z=1|x)}{p(z=0|x)} = \ln \frac{p(z=1,x)}{p(z=0,x)} = \ln\frac{p(x|z=1)}{p(x|z=0)} = \ln\frac{f(x)}{g(x)}
\end{equation}

We see that 
\begin{equation}
    \KLD{f}{g} = \EV{x \sim p(f,g)}{\ln \frac{f(x)}{g(x)}} = \EV{x \sim p(f,g)}{\ln \frac{p(z=1|x)}{p(z=0|x)}} = \EV{x \sim p(f,g)}{\ln \mathbb{L}(p(z=1|x))}
    \label{eqKLest_appendix}
\end{equation}
where $\mathbb{L}(\cdot)$ Logit Transform.

Given data, we can approximate the KLD by training a classifier
$p_{\theta}(z|x)$ and then approximating the expectation in the
RHS of the last equation by averaging over the data.
{\bf Does anything here exclude discrete distributions?}

\section{Approximations to $\Xi(\cdot)$}
\section{Additional Experiments and Experimental Details}
\newpage
}

\end{document}